\font\stixfrak=stix-mathfrak at 9pt
\newcommand\Func[2]{%
    \textbf{function} #1
    \algocf@group{#2}%
}
\newcommand\Forr[2]{%
    \textbf{for} #1 \textbf{do}%
    \algocf@group{#2}%
}
\newcommand\Blnk[2]{%
    \hspace{10pt}#1
    \algocf@group{#2}%
    \textbf{end}
}
\crefname{algocf}{alg.}{algs.}
\Crefname{algocf}{Algorithm}{Algorithms}
\newtheorem{theorem}{Theorem}
\newtheorem{definition}{Definition}
\newtheorem{remark}{Remark}
\newtheorem{proposition}{Proposition}
\newtheorem{problem}{Problem}
\newtheorem{subproblem}{Problem}[problem]
\newcommand{\nn}{{\mathscr{N}\negthickspace\negthickspace\negthinspace\mathscr{N}}\negthinspace}
\newcommand{\overbar}[1]{\mkern 3.0mu\overline{\mkern-2.5mu#1\mkern-2.0mu}\mkern 2.0mu}
\newcommand{\lblkbrbrack}{\negthinspace\text{{\stixfrak\char"36}}\normalfont}
\newcommand{\rblkbrbrack}{\text{{\stixfrak\char"37}}\normalfont}
\newcommand{\subarg}[1]{\lblkbrbrack #1 \rblkbrbrack}
\newcommand{\myalg}{Fast BATLLNN}
\title{Fast BATLLNN: Fast Box Analysis of Two-Level Lattice Neural Networks}
\author{James Ferlez}
\affiliation{
	\institution{University of California, Irvine}
	\department{Dept. of Electrical Engineering and Computer Science}
	\country{USA}
}
\email{jferlez@uci.edu}
\author{Haitham Khedr}
\affiliation{
	\institution{University of California, Irvine}
	\department{Dept. of Electrical Engineering and Computer Science}
	\country{USA}
}
\email{hkhedr@uci.edu}
\author{Yasser Shoukry}
\affiliation{
	\institution{University of California, Irvine}
	\department{Dept. of Electrical Engineering and Computer Science}
	\country{USA}
}
\email{yshoukry@uci.edu}
\keywords{Neural Networks, Neural Network Verification, Rectified Linear Units}
\begin{document}

\begin{abstract}
	In this paper, we present the tool Fast Box Analysis of Two-Level Lattice 
	Neural Networks (\myalg) as a fast verifier of box-like output constraints 
	for Two-Level Lattice (TLL) Neural Networks  (NNs). In particular, \myalg~ 
	can verify whether the output of a given TLL NN always lies within a 
	specified hyper-rectangle whenever its input constrained to a specified 
	convex polytope (not necessarily a hyper-rectangle). \myalg~uses the unique 
	semantics of the TLL architecture and the decoupled nature of box-like 
	output constraints to dramatically improve verification performance 
	relative to known polynomial-time verification algorithms for TLLs with 
	generic polytopic output constraints.
	In this paper, we evaluate the performance and scalability of \myalg, both 
	in its own right and compared to state-of-the-art NN verifiers applied to  
	TLL NNs. \myalg~compares very favorably to even the fastest NN verifiers, 
	completing our synthetic TLL test bench more than 400x faster than its 
	nearest competitor. 

\end{abstract}

\maketitle


\section{Introduction} 
\label{sec:introduction}
Neural Networks (NNs) increasingly play vital roles within safety-critical 
cyber-physical systems (CPSs), where they either make safety-critical decisions 
directly (as in the case of low-level controllers) or influence high-level 
supervisory decision making (e.g. through vision networks).
Ensuring the safety of such systems thus demands algorithms capable of  
\emph{formally verifying} the safety of their NN components. However, since CPS 
safety is characterized by \emph{closed-loop} behavior, it is not enough to 
pragmatically verify the input/output behavior of a NN component \emph{once}. 
Such a verifier must additionally be as fast as possible, so that it can 
feasibly be invoked many times during the course of verifying a closed-loop 
property (as in \cite{TranNNVNeuralNetwork2020, WangNeuralNetworkControl2020} 
for example).

In this paper, we propose \myalg~as an input/output verifier for Rectified 
Linear Unit (ReLU) NNs with a special emphasis on execution time. In 
particular, \myalg~takes a relatively uncommon approach among verifiers in that 
it \textbf{explicitly trades off generality for execution time}: whereas most 
NN verifiers are designed to work for arbitrary deep NNs and arbitrary 
half-space output properties (or the intersections thereof) \cite{vnn2020}, 
\myalg~instead forgoes this generality in network and properties to reduce 
verification time. That is, \myalg~is only able to verify \emph{a very specific 
subset of deep NNs}: those characterized by a particular architecture, the 
Two-Level Lattice (TLL) NN architecture introduced in 
\cite{FerlezAReNAssuredReLU2020}; see also \Cref{fig:tll_arch} and 
\Cref{sub:two_layer_lattice_neural_networks}. Similarly, \myalg~is restricted 
to verifying \emph{only ``box''-like output constraints} (formally, 
hyper-rectangles).
\begin{figure}[b]
	\centering %
	\includegraphics[width=0.42\textwidth]{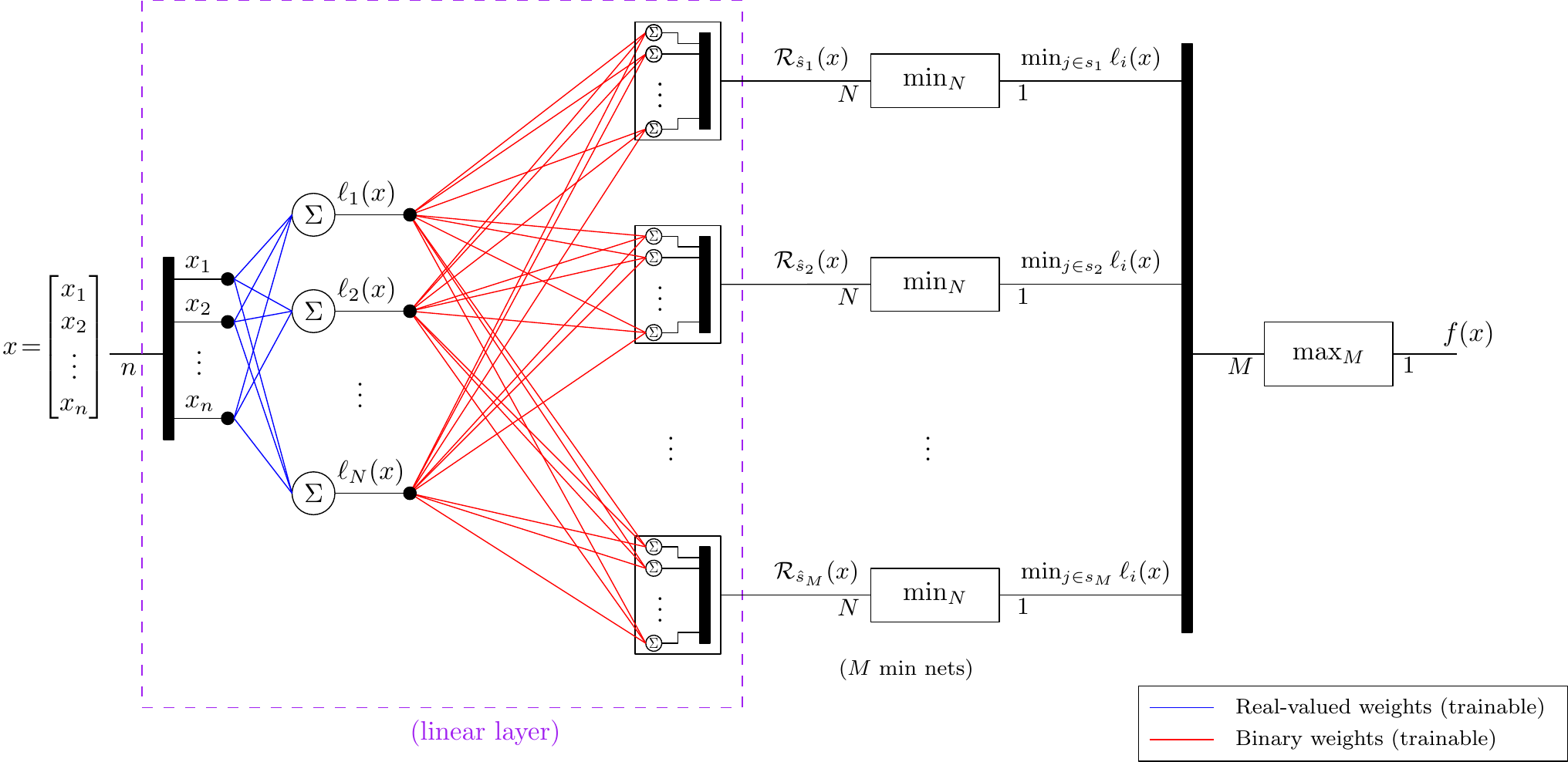} %
	\caption{Illustration of TLL NN architecture for a function $\mathbb{R}^n \rightarrow \mathbb{R}$; reproduced from \cite{FerlezAReNAssuredReLU2020}. More details in \Cref{sub:two_layer_lattice_neural_networks}.} %
	\label{fig:tll_arch} %
\end{figure}
Through extensive experiments, \myalg~exemplifies that sacrificing generality 
in both of these senses can lead to dramatically faster verification times. 
\emph{Compared to state-of-the-art general NN verifiers, \myalg~is 400-1900x 
faster verifying the same TLL NNs and properties.}

In this sense, \myalg~ is primarily inspired by the recent result 
\cite{FerlezBoundingComplexityFormally2020}, which showed that verifying a 
Two-Level Lattice (TLL) NN is an ``easier'' problem than verifying a general 
deep NN. Specifically, \cite{FerlezBoundingComplexityFormally2020} exhibits a 
\emph{polynomial time} algorithm to verify a TLL  with respect to an arbitrary 
half-space output property (i.e. polynomial-time in the number of neurons).  
Indeed, the \emph{semantic} structure of the TLL architecture is precisely what 
makes polynomial-time verification possible: in a TLL NN, the neuronal 
parameters provide direct (polynomial-time) access to each of the affine 
functions that appear in its response, viewed as a Continuous Piecewise Affine 
(CPWA) function\footnote{Recall that any ReLU NN implements  a CPWA: i.e., a 
function that continuously switches between finitely many affine functions.} 
\cite{FerlezBoundingComplexityFormally2020}. Since the same cannot be said of 
the neuronal parameters in a general deep NN, this indicates that  considering 
only TLL NNs can facilitate a much faster verifier.

Thus, the major contribution of \myalg~is to further leverage the semantics of 
the TLL architecture under the additional assumption of verifying box-type (or 
hyper-rectangle) output properties. In particular, a TLL NN implements  
(component-wise) $\min$ and $\max$ lattice operations to compute each of its 
real-valued output components (as illustrated in \Cref{fig:tll_arch}; see also 
\Cref{sub:two_layer_lattice_neural_networks}). This fact can be used to 
dramatically simplify the verification of box-like output properties, which are 
component-wise \emph{real-valued intervals} -- and hence mutually decoupled. 
Importantly, the algorithm proposed in 
\cite{FerlezBoundingComplexityFormally2020} cannot take advantage of these  
lattice operations in the same way, since it considers only general half-space 
properties, which naturally \emph{couple} the various output components of the 
TLL NN. As a result, we can show that \myalg~has a big-O verification 
complexity whose crucial exponent is \emph{half the size of the analogous 
exponent in} \cite{FerlezBoundingComplexityFormally2020}. The performance 
consequences of this improvement are reflected in our experimental results.

Before we proceed further, it is appropriate to make a few remarks about the 
restrictions inherent to \myalg. Between the two restrictions of significance 
-- the restriction to TLL NNs and the restriction to box-like output properties 
-- the former is apparently more onerous: box-like properties can be used to 
adaptively assess more complicated properties whenever box-like properties are 
themselves inadequate. However even the restriction to TLL NNs is less imposing 
than at first it may seem. On the one hand, it is known that TLL NNs are 
capable of representing any Continuous Piecewise-Affine (CPWA) function 
\cite{FerlezAReNAssuredReLU2020, TarelaRegionConfigurationsRealizability1999}; 
i.e., any function that continuously switches between a finite set of affine 
functions. Since deep NNs themselves realize CPWA functions, the TLL NN 
architecture is able to instantiate any function that a generic deep NN can.  
We do not consider the problem of converting a deep NN to the TLL architecture 
(nor the possible loss in parametric efficiency that may result), but the 
extremely fast verification times achievable with \myalg~suggest that the trade 
off is very likely worth the cost. On the other hand, there is a spate of 
results which suggest that the TLL architecture a useful architecture within 
which to do  closed-loop controller design in the first place  
\cite{FerlezAReNAssuredReLU2020, CruzSafebyRepairConvexOptimization2021, 
FerlezTwoLevelLatticeNeural2020} -- potentially obviating the need for such a 
conversion at all.

\noindent \textbf{Related work:} 
To the best of our knowledge, the work 
\cite{FerlezBoundingComplexityFormally2020} is the only current example of an 
attempt to verify a specific NN architecture rather than a generic deep NN.  

The literature on more general NN verifiers is far richer. These NN verifiers 
can generally be grouped into three categories: (i) SMT-based methods, which 
encode the problem into a Satisfiability Modulo Theory 
problem~\cite{ehlers2017formal,KatzReluplexEfficientSMT2017a,katz2019marabou}; 
(ii) MILP-based solvers, which directly encode the verification problem as a 
Mixed Integer Linear 
Program~\cite{anderson2020strong,bastani2016measuring, bunel2020branch, cheng2017maximum,fischetti2018deep,lomuscio2017approach,tjeng2017evaluating};
(iii) Reachability based methods, which perform layer-by-layer reachability 
analysis to compute the reachable
set~\cite{BakImprovedGeometricPath2020,fazlyab2019efficient,gehr2018ai2,ivanov2019verisig,tran2020nnv,wang2018formal,xiang2017reachable,xiang2018output};
and (iv) convex relaxations 
methods~\cite{dvijotham2018dual,wang2018efficient,wong2017provable,KhedrPEREGRiNNPenalizedRelaxationGreedy2020}.
Methods in categories (i) - (iii) tend to suffer from poor scalability,  
especially relative to convex relaxation methods. In this paper, we perform 
comparisons with  state-of-the-art examples from category (iv) 
\cite{KhedrPEREGRiNNPenalizedRelaxationGreedy2020, wang2021betacrown} and 
category (iii) \cite{BakImprovedGeometricPath2020}, as perform well overall in 
the standard verifier competition \cite{vnn2020}.

\section{Preliminaries} 
\label{sec:preliminaries}

\subsection{Notation} 
\label{sub:notation}
We will denote the real numbers by $\mathbb{R}$. For an $(n \times m)$ matrix 
(or vector), $A$, we will use the notation $\llbracket A \rrbracket_{[i,j]}$ to 
denote the element in the $i^\text{th}$ row and $j^\text{th}$ column of $A$. 
Analogously, the notation $\llbracket A \rrbracket_{[i,:]}$ will denote the 
$i^\text{th}$ row of $A$, and $\llbracket A \rrbracket_{[:, j]}$ will denote 
the $j^\text{th}$ column of $A$; when $A$ is a vector instead of a matrix, both 
notations will return a scalar corresponding to the corresponding element in 
the vector.
We will use bold parenthesis $\;\subarg{ \cdot }$ to delineate the arguments to 
a function that \emph{returns a function}. We use two special forms of this 
notation: for an $(m \times n)$ matrix, $W$, and an $(m \times 1)$ vector, $b$: 
\begin{align}
\mathscr{L} \subarg{ W, b } &: x \mapsto W x + b \\
\mathscr{L}_i \subarg{ W, b } &: x \mapsto \llbracket W \rrbracket_{[i,:]} x + \llbracket b \rrbracket_{[i,1]}.
\end{align}
We also use the functions $\mathtt{First}$ and $\mathtt{Last}$ to return the 
first and last elements of an ordered list (or by overloading, a vector in 
$\mathbb{R}^n$). The function $\mathtt{Concat}$ concatenates two ordered lists, 
or by overloading, concatenates two vectors in $\mathbb{R}^n$ and 
$\mathbb{R}^m$ along their (common) nontrivial dimension to get a third vector 
in $\mathbb{R}^{n+m}$. Finally, an over-bar indicates (topological) closure of 
a set: i.e. $\overbar{A}$ is the closure of $A$. 

\subsection{Neural Networks} 
\label{sub:neural_networks}
We will exclusively consider Rectified Linear Unit Neural Networks (ReLU NNs). 
A $K$-layer ReLU NN is specified by $K$ \emph{layer} functions,  
$\{L_{\theta^{|i}} : i = 1, \dots, K\}$, of which we allow two kinds: linear 
and nonlinear. A \emph{nonlinear} layer is a function:
\begin{equation}
	L_{\theta^{|i}} : \mathbb{R}^{\mathfrak{i}} \rightarrow \mathbb{R}^{\mathfrak{o}},  \qquad
	    L_{\theta^{|i}} :  z \mapsto \max\{ W z + b, 0 \}
\end{equation}
where the $\max$ function is taken element-wise, $W^{|i}$ and $b^{|i}$ are 
matrices of appropriate dimensions, and $\theta^{|i} \triangleq 
(W^{|i},b^{|i})$. A \emph{linear} layer is the same as a nonlinear layer, 
except it omits the nonlinearity $\max\{\cdot , 0\}$ in its layer function; a 
linear layer will be indicated with a superscript \emph{lin} e.g. 
$L^\text{lin}_{\theta^{|i}}$
Thus, a $K$-layer ReLU NN function is specified by functionally composing $K$ 
such layer functions 
whose input and output dimensions satisfy $\mathfrak{i}_{i} = 
\mathfrak{o}_{i-1}: i = 2, \dots, K$. \emph{We always consider the final layer 
to be a linear layer}, so we may define:
\begin{equation}
	\nn = L_{\theta^{|K}}^\text{lin} \circ L_{\theta^{|K-1}}^{\vphantom{\text{lin}}} \circ \dots \circ
		L_{\theta^{|1}}^{\vphantom{\text{lin}}}
\end{equation}
To make the dependence on parameters explicit, we will index a ReLU function 
$\nn$ by a \emph{list of matrices} $\Theta \triangleq$ $( \theta^{|1},$ 
$\dots,$ $\theta^{|K} );$\footnote{That is $\Theta$ is not the concatenation of 
the $\theta^{|i}$ into a single large matrix, so it preserves information about 
the sizes of the constituent $\theta^{|i}$.} in this respect, we will often use 
$\nn = \nn\subarg{\Theta}$. 




\subsection{Two-Level-Lattice (TLL) Neural Networks} 
\label{sub:two_layer_lattice_neural_networks}
In this paper, we are exclusively concerned Two-Level Lattice (TLL) ReLU NNs  
as noted above. In this subsection, we formally define NNs with the TLL 
architecture using the succinct method exhibited in 
\cite{FerlezBoundingComplexityFormally2020}; the material in this subsection is 
derived from \cite{FerlezAReNAssuredReLU2020, 
FerlezBoundingComplexityFormally2020}.

The most efficient way to characterize a TLL NN is by way of three generic NN 
composition operators. Hence, we have the following three definitions, which 
serve as auxiliary results in order to eventually define a TLL NN in 
\Cref{def:scalar_tll}.
\begin{definition}[Sequential (Functional) Composition]
\label{def:functional_composition}
	Let $\nn\subarg{\Theta_{\scriptscriptstyle 1}}: \mathbb{R}^\mathfrak{i}  
	\rightarrow \mathbb{R}^\mathfrak{c}$ and 
	$\nn\subarg{\Theta_{\scriptscriptstyle 2}} : \mathbb{R}^\mathfrak{c}  
	\rightarrow \mathbb{R}^\mathfrak{o}$ be two NNs. 
	Then the \textbf{sequential (or functional) composition} of 
	$\nn\subarg{\Theta_{\scriptscriptstyle 1}}$ and 
	$\nn\subarg{\Theta_{\scriptscriptstyle 2}}$, i.e.  
	$\nn\subarg{\Theta_{\scriptscriptstyle 1}} \circ  
	\nn\subarg{\Theta_{\scriptscriptstyle 2}}$, is a well defined NN, and can 
	be represented by the parameter list $\Theta_{1} \circ \Theta_{2} 
	\triangleq \mathtt{Concat}(\Theta_1, \Theta_2)$.
\end{definition}
\begin{definition}
	\label{def:parallel_composition}
	Let $\nn\subarg{\Theta_{\scriptscriptstyle 1}}$ and  
	$\nn\subarg{\Theta_{\scriptscriptstyle 2}}$ be two $K$-layer NNs with 
	parameter lists:
	\begin{equation}
		\Theta_i = ((W^{\scriptscriptstyle |1}_i, b^{\scriptscriptstyle |1}_i), \dots, (W^{\scriptscriptstyle |K}_i, b^{\scriptscriptstyle |K}_i)), \quad i = 1,2.
	\end{equation}
	Then the \textbf{parallel composition} of  
	$\nn\subarg{\Theta_{\scriptscriptstyle 1}}$ and  
	$\nn\subarg{\Theta_{\scriptscriptstyle 2}}$ is a NN given by the parameter 
	list
	\begin{equation}
		\Theta_{1} \parallel \Theta_{2} \triangleq \big(\negthinspace
			\left(
				\negthinspace
				\left[
					\begin{smallmatrix}
						W^{\scriptscriptstyle |1}_1 \\
						W^{\scriptscriptstyle |1}_2
					\end{smallmatrix}
				\right],
				\left[
					\begin{smallmatrix}
						b^{\scriptscriptstyle |1}_1 \\
						b^{\scriptscriptstyle |1}_2
					\end{smallmatrix}
				\right]
				\negthinspace
			\right),
			{\scriptstyle \dots},
			\left(
				\negthinspace
				\left[
					\begin{smallmatrix}
						W^{\scriptscriptstyle |K}_1 \\
						W^{\scriptscriptstyle |K}_2
					\end{smallmatrix}
				\right],
				\left[
					\begin{smallmatrix}
						b^{\scriptscriptstyle |K}_1 \\
						b^{\scriptscriptstyle |K}_2
					\end{smallmatrix}
				\right]
				\negthinspace
			\right)
		\negthinspace\big).
	\end{equation}
	That is $\Theta_{1} \negthickspace \parallel \negthickspace \Theta_{2}$ 
	accepts an input of the same size as (both) $\Theta_1$ and $\Theta_2$, but 
	has as many outputs as $\Theta_1$ and $\Theta_2$ combined.
\end{definition}

\begin{definition}[$n$-element $\min$/$\max$ NNs]
	\label{def:n-element_minmax_NN}
	An $n$\textbf{-element $\min$ network} is denoted by the parameter list 
	$\Theta_{\min_n}$. $\nn\subarg{\Theta_{\min_n}}: \mathbb{R}^n \rightarrow 
	\mathbb{R}$ such that $\nn\subarg{\Theta_{\min_n}}(x)$ is the minimum 
	from among the components of $x$ (i.e. minimum according to the usual order 
	relation $<$ on $\mathbb{R}$). An $n$\textbf{-element $\max$ network} is 
	denoted by $\Theta_{\max_n}$, and functions analogously. These networks are 
	described in \cite{FerlezAReNAssuredReLU2020}.
\end{definition}

With \Cref{def:functional_composition} through \Cref{def:n-element_minmax_NN} 
in hand, it is now possible for us to define TLL NNs in just the same way as 
\cite{FerlezBoundingComplexityFormally2020}. We likewise proceed to first 
define a \emph{scalar (or real-valued)} TLL NN; the structure of such a scalar 
TLL NN is illustrated in \Cref{fig:tll_arch}. Then we extend this notion to a 
\emph{multi-output (or vector-valued)} TLL NN.
\begin{definition}[Scalar TLL NN {\cite{FerlezBoundingComplexityFormally2020}}]
\label{def:scalar_tll}
A NN that maps $\mathbb{R}^n \rightarrow \mathbb{R}$ is said to be \textbf{TLL 
NN of size} $(N,M)$ if the size of its parameter list $\Xi_{\scriptscriptstyle 
N,M}$ can be characterized entirely by integers $N$ and $M$ as follows.
\begin{equation}
	\Xi_{N,M} \negthinspace \triangleq  \negthinspace
		\Theta_{\max_M} \negthinspace\negthinspace
	\circ \negthinspace
		\big(
			(\negthinspace\Theta_{\min_N} \negthinspace \circ \Theta_{S_1}\negthinspace) \negthinspace
			\parallel \negthinspace {\scriptstyle \dots} \negthinspace \parallel \negthinspace
			(\negthinspace\Theta_{\min_N} \negthinspace \circ \Theta_{S_M}\negthinspace)
		\big) \negthinspace
	\circ 
		\Theta_{\ell}
\end{equation}
where

\begin{itemize}
	\item $\Theta_\ell \triangleq ((W_\ell, b_\ell))$;

	\item  each $\Theta_{S_j}$ has the form $\Theta_{S_j} = \big( S_j,  
		\mathbf{0}_{N,1} \big)$; and

	\item $S_j = \left[ \begin{smallmatrix} {\llbracket I_N 
		\rrbracket_{[\iota_1, 
		:]}}\negthickspace\negthickspace\negthickspace^{^{\scriptscriptstyle\text{T}}} 
		& \; \dots \; & {\llbracket I_N \rrbracket_{[\iota_N, 
		:]}}\negthickspace\negthickspace\negthickspace^{^{\scriptscriptstyle\text{T}}} 
		\end{smallmatrix} \right]^\text{T}$ for some length-$N$ sequence 
		$\{\iota_k\} \subseteq \{1, \dots, N\}$ where $I_N$ is the $(N \times 
		N)$ identity matrix. 
\end{itemize}


The linear functions implemented by the mapping  
$\mathscr{L}_i\subarg{W_\ell,b_\ell}$ for $i = 1, \dots, N$ will be referred to 
as the \textbf{local linear functions} of $\Xi_{N,M}$; we assume for simplicity 
that these linear functions are unique. The matrices $\{ S_j | j = 1, \dots, 
M\}$ will be referred to as the \textbf{selector matrices} of $\Xi_{N,M}$. Each 
set $s_j \triangleq \{ k \in \{1, \dots, N\} | \exists \iota \in \{1, \dots, 
N\}. \llbracket S_j \rrbracket_{\iota,k} = 1 \}$ is said to be the  
\textbf{selector set of} $S_j$.
\end{definition}

\begin{definition}[Multi-output TLL NN {\cite{FerlezBoundingComplexityFormally2020}}]
\label{def:multi-output_tll}
	A NN that maps $\mathbb{R}^n \rightarrow \mathbb{R}^m$ is said to be a 
	\textbf{multi-output TLL NN of size} $(N,M)$ if its parameter list 
	$\Xi_{\scriptscriptstyle N,M}^{(m)}$ can be written as 
	\begin{equation}
	\label{eq:multi_out_tll}
		\Xi_{\scriptscriptstyle N,M}^{(m)} = \Xi_{\scriptscriptstyle N, M}^1 
			\parallel
			\dots
			\parallel
			\Xi_{\scriptscriptstyle N, M}^m
	\end{equation}
	for $m$ equally-sized scalar TLL NNs, $\Xi_{\scriptscriptstyle N, M}^1,  
	\dots, \Xi_{\scriptscriptstyle N, M}^m$; these scalar TLLs will be referred 
	to as the \textbf{(output) components of} $\Xi_{\scriptscriptstyle 
	N,M}^{(m)}$.
\end{definition}



\subsection{Hyperplanes and Hyperplane Arrangements} 
\label{sub:hyperplanes_and_hyperplane_arrangements}
Here  we review notation for hyperplanes and hyperplane arrangements; these 
results will be important in the developemnt of \myalg. 
\cite{StanleyIntroductionHyperplaneArrangements} is the main reference for this 
section.
\begin{definition}[Hyperplanes and Half-spaces]
\label{def:hyperplane}
	Let $\ell : \mathbb{R}^n \rightarrow \mathbb{R}$ be an affine map. Then 
	define:
	\begin{equation}
		H^{q}_{\ell} \triangleq 
		\begin{cases}
			\{x | \ell(x) < 0 \} & q = -1 \\
			\{x | \ell(x) > 0 \} & q = +1 \\
			\{x | \ell(x) = 0 \} & q = 0.
		\end{cases}
	\end{equation}
	We say that $H^{0}_\ell$ is the \textbf{hyperplane defined by} $\ell$ 
	\textbf{in dimension }$n$, and $H^{-1}_\ell$ and $H^{+1}_\ell$ are the 
	\textbf{negative and positive half-spaces defined by} $\ell$, respectively.
\end{definition}



\begin{definition}[Hyperplane Arrangement]
	Let $\mathcal{L}$ be a set of affine functions where each $\ell \in 
	\mathcal{L} : \mathbb{R}^n \rightarrow \mathbb{R}$. Then $\{ H^{0}_{\ell} | 
	\ell \in \mathcal{L} \}$ is an \textbf{arrangement of hyperplanes in 
	dimension} $n$.
\end{definition}

\begin{definition}[Region of a Hyperplane Arrangement]
\label{def:hyperplane_region}
	Let $\mathcal{H}$ be an arrangement of $N$ hyperplanes in dimension $n$ 
	defined by a set of affine functions, $\mathcal{L}$. Then a non-empty open 
	subset $R \subseteq \mathbb{R}^n$ is said to be a 
	\textbf{region of} $\mathcal{H}$ if there is an indexing function 
	$\mathfrak{s} : \mathcal{L} \rightarrow \{-1, 0, +1\}$ such that
	$R = \bigcap_{\ell \in \mathcal{L}} H^{\mathfrak{s}(\ell)}_\ell$; $R$ is 
	said to be \textbf{$n$-dimensional or full-dimensional} if it is non-empty 
	and described by an indexing function $\mathfrak{s}(\ell) \in \{-1, +1\}$ 
	for all $\ell \in  \mathcal{L}$.
\end{definition}




\begin{theorem}[\cite{StanleyIntroductionHyperplaneArrangements}]
\label{thm:arrangement_regions_bound}
	Let $\mathcal{H}$ be an arrangement of $N$ hyperplanes in dimension $n$. 
	Then $|\mathcal{R}_\mathcal{H}|$ is at most $\sum_{k=0}^n {N \choose k}$.
\end{theorem}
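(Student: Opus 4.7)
The plan is to prove the bound by a double induction on the pair $(N,n)$, which is the standard Zaslavsky/Buck-style argument for counting regions of a hyperplane arrangement. For the base cases, I would verify $N=0$ (where $\mathbb{R}^n$ itself is the unique region and $\sum_{k=0}^{n}\binom{0}{k}=1$) and $n=1$ (where $N$ points partition the line into at most $N+1$ pieces, matching $\binom{N}{0}+\binom{N}{1}$). These also handle the degenerate case $n=0$ trivially.

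For the inductive step, I would fix one hyperplane $H_{\ell_0}^0 \in \mathcal{H}$ and let $\mathcal{H}' \triangleq \mathcal{H}\setminus\{H_{\ell_0}^0\}$, which is an arrangement of $N-1$ hyperplanes in dimension $n$. The key geometric claim I need is that
\begin{equation}
|\mathcal{R}_{\mathcal{H}}| \;\leq\; |\mathcal{R}_{\mathcal{H}'}| \,+\, |\mathcal{R}_{\mathcal{H}''}|,
\end{equation}
where $\mathcal{H}''$ is the ``induced'' arrangement on $H_{\ell_0}^0$, namely the collection of hyperplanes $\{H_{\ell_0}^0 \cap H_\ell^0 : \ell \in \mathcal{L}\setminus\{\ell_0\}\}$ viewed inside $H_{\ell_0}^0 \cong \mathbb{R}^{n-1}$ (after dropping any that are empty or coincide with $H_{\ell_0}^0$). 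The argument is that every region $R\in\mathcal{R}_\mathcal{H}$ is contained in some region $R'\in\mathcal{R}_{\mathcal{H}'}$, and $R'$ either (i) is disjoint from $H_{\ell_0}^0$, in which case $R'$ contributes exactly one region to $\mathcal{R}_\mathcal{H}$, or (ii) meets $H_{\ell_0}^0$, in which case it is split by $H_{\ell_0}^0$ into exactly two regions of $\mathcal{R}_\mathcal{H}$. The ``extra'' regions in case (ii) are in bijection with the regions of $\mathcal{H}''$, since $R' \cap H_{\ell_0}^0$ is precisely a region of $\mathcal{H}''$ (using \Cref{def:hyperplane_region}, which requires non-empty open intersections of half-spaces).

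Granting this claim, the induction closes by Pascal's identity:
\begin{align}
|\mathcal{R}_\mathcal{H}| &\leq \sum_{k=0}^{n}\binom{N-1}{k} + \sum_{k=0}^{n-1}\binom{N-1}{k} \\
&= \binom{N-1}{0} + \sum_{k=1}^{n}\left[\binom{N-1}{k}+\binom{N-1}{k-1}\right] \\
&= 1 + \sum_{k=1}^{n}\binom{N}{k} \;=\; \sum_{k=0}^{n}\binom{N}{k}.
\end{align}

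The main obstacle is a clean justification of the splitting claim, in particular verifying that case (ii) always produces exactly two full-dimensional regions (rather than zero or one), and that case (i) contributes exactly one. This needs the open/non-degenerate structure baked into \Cref{def:hyperplane_region}: if $R'$ meets $H_{\ell_0}^0$, then $R'$ is an open connected set cut by a hyperplane, so $R' \cap H_{\ell_0}^{+1}$ and $R' \cap H_{\ell_0}^{-1}$ are both non-empty open sets, each an intersection of open half-spaces from $\mathcal{L}$, and hence each is a region of $\mathcal{H}$. A subtle point to address is what happens if some induced hyperplanes in $\mathcal{H}''$ coincide or are empty (i.e., when two original hyperplanes are parallel or identical on $H_{\ell_0}^0$); this can only decrease $|\mathcal{R}_{\mathcal{H}''}|$, so the inequality above still holds, which is why the statement is an upper bound rather than an equality.
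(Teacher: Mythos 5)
The paper offers no proof of this theorem --- it is quoted directly from \cite{StanleyIntroductionHyperplaneArrangements} --- and your argument is the standard deletion--restriction induction found in that reference, carried out correctly: the inequality $|\mathcal{R}_{\mathcal{H}}| \leq |\mathcal{R}_{\mathcal{H}'}| + |\mathcal{R}_{\mathcal{H}''}|$ together with Pascal's identity closes the double induction on $(N,n)$. Your treatment of the degenerate cases (regions of $\mathcal{H}'$ that miss $H^{0}_{\ell_0}$, and coincident or empty induced hyperplanes in $\mathcal{H}''$) correctly accounts for why the result is an upper bound rather than an equality, so there is no gap.
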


\begin{remark}
	Note that for a fixed dimension, $n$, the bound $\sum_{k=0}^n \negthinspace 
	{N \choose k}$ grows like $O(N^n\negthinspace/n!)$, i.e. sub-exponentially 
	in $N$.
\end{remark}

\section{Problem Formulation} 
\label{sec:problem}
The essence of \myalg~is its focus on verifying TLL NNs with respect to 
box-like output constraints. Formally, \myalg~considers only verification 
problems of the following form (stated using notation from 
\Cref{sec:preliminaries}).
\begin{problem}
\label{prob:tll_verification_multi}
	Let $\nn\subarg{\Xi_{\scriptscriptstyle N,M}^{(m)}}: \mathbb{R}^n 
	\rightarrow \mathbb{R}^m$ be a multi-output TLL NN. Also, let:

	\begin{itemize}
		\item $P_X \subset \mathbb{R}^n$ be a closed, convex polytope 
			specified by the intersection of $\mathsf{N}_X$ half-spaces, i.e. 
			$P_X \triangleq \cap_{i=1}^{\mathsf{N}_X}  \{x:\ell_{X,i}(x) \leq  
			0\}$ where each $\ell_{X,i} : \mathbb{R}^n \rightarrow  \mathbb{R}$ 
			is affine; and

		\item $P_Y \subset \mathbb{R}^m$ be closed hyper-rectangle, i.e. 
			$P_Y \triangleq \prod_{k=1}^{m} [a_k, b_k]$ with $-\infty \leq a_k 
			\leq b_k \leq \infty$ for each $k$.
	\end{itemize}
	Then the verification problem is to decide whether the following formula is 
	true:
	\begin{equation}
	\label{eq:verification_formula}
		\forall x \in P_X \subset \mathbb{R}^n . \big( \nn\subarg{\Xi_{\scriptscriptstyle N,M}^{(m)}}\negthinspace(x) \in P_Y \subset \mathbb{R}^m \big).
	\end{equation}
	If \eqref{eq:verification_formula} is true, the problem is \textbf{SAT}; 
	otherwise, it is \textbf{UNSAT}.
\end{problem}

Note that the properties (and their interpretations) in 
\Cref{prob:tll_verification_multi} \emph{are dual to the usual convention}; it 
is more typical in the literature to associate ``unsafe'' outputs with a 
closed, convex polytope, and then the \emph{existence} of such unsafe outputs 
is denoted by  \textbf{UNSAT} (see \cite{TranStarBasedReachabilityAnalysis2019} 
for example). However, we chose this formulation for 
\Cref{prob:tll_verification_multi} because it is the one adopted by 
\cite{FerlezBoundingComplexityFormally2020}, and because it is more suited to  
NN reachability computations, one of the motivating applications of \myalg. 
Indeed, to verify a property like \eqref{eq:verification_formula}, the typical 
dual formulation of \Cref{prob:tll_verification_multi} would require $2\cdot  
m$ verifier calls, assuming unbounded polytopes are verifiable (and then the 
verification would only be respect to the interior of $P_Y$). Of course this 
choice comes with a trade-off: \myalg, which directly solves 
\Cref{prob:tll_verification_multi}, requires adaptation to verify the dual 
property of \Cref{prob:tll_verification_multi}; we return to this briefly at 
the end of this section, but it is ultimately left for future work.

In the case of \myalg, there is another important reason to consider the stated 
formulation of \Cref{prob:tll_verification_multi}: both the output property 
$P_Y$ and the NN $\Xi_{\scriptscriptstyle N,M}^{(m)}$ have an essentially  
\emph{component-wise nature} (see also \Cref{def:multi-output_tll}). As a  
result, a component-wise treatment of \Cref{prob:tll_verification_multi} 
greatly facilitates the development and operation of \myalg. To this end, we 
will find it convenient in the sequel to consider the following two 
verification problems; each is specified for a \emph{scalar TLL NNs and a 
single real-valued output property.} Moreover, we cast them in terms of the 
negation of the analogous formula derived from 
\Cref{prob:tll_verification_multi}; the reasons for this will become clear in 
\Cref{sec:algorithm}.

\begin{subproblem}[Scalar Upper Bound]
\label{prob:scalar_upper_bound}
Let $\nn\subarg{\Xi_{\scriptscriptstyle N,M}}: \mathbb{R}^n \rightarrow 
\mathbb{R}$ be a \textbf{scalar TLL NN}, and let $P_X \triangleq 
\cap_{i=1}^{\mathsf{N}_X}  \{x:\ell_{X,i}(x) \leq  0\}$ be a closed convex 
polytope  as in \Cref{prob:tll_verification_multi}.

Then the \textbf{scalar upper bound verification problem for } $a \in  
\mathbb{R}$ is to decide whether the following formula is true:
\begin{equation}
\label{eq:scalar_upper_bound_formula}
	\exists x \in P_X \subset \mathbb{R}^n . \big( \nn\subarg{\Xi_{\scriptscriptstyle N,M}}\negthinspace(x) > b \big).
\end{equation}
If \eqref{eq:scalar_upper_bound_formula} is true, the problem is  
\textbf{UNSAT}; otherwise, it is \textbf{SAT}.
\end{subproblem}
\begin{subproblem}[Scalar lower Bound]
\label{prob:scalar_lower_bound}
Let $\nn\subarg{\Xi_{\scriptscriptstyle N, M}}: \mathbb{R}^n \rightarrow  
\mathbb{R}$  and $P_X$ be as in \Cref{prob:scalar_upper_bound}.

Then the \textbf{scalar lower bound verification problem for } $b \in  
\mathbb{R}$ is to decide whether the following formula is true:
\begin{equation}
\label{eq:scalar_lower_bound_formula}
	\exists x \in P_X \subset \mathbb{R}^n . \big( \nn\subarg{\Xi_{\scriptscriptstyle N,M}}\negthinspace(x) < a \big).
\end{equation}
If \eqref{eq:scalar_lower_bound_formula} is true, the problem is  
\textbf{UNSAT}; otherwise, it is \textbf{SAT}.
\end{subproblem}

Thus, note that the formulation of \Cref{prob:tll_verification_multi} is such 
that it can be verified by evaluating a boolean formula that contains only 
instances of \Cref{prob:scalar_upper_bound} and \Cref{prob:scalar_lower_bound}. 
That is, the following formula has the same truth value as 
\eqref{eq:verification_formula}:
\begin{multline}\label{eq:prob1_as_1ab}
	\bigwedge_{k=1}^m \Bigg(
		\neg \Big(\exists x \in P_X \subset \mathbb{R}^n . \big( \nn\subarg{\Xi_{\scriptscriptstyle N,M}^k}\negthinspace(x) < a_k \big) \Big) \quad \wedge \\[-10pt]
		\neg \Big( \exists x \in P_X \subset \mathbb{R}^n . \big( \nn\subarg{\Xi_{\scriptscriptstyle N,M}^k}\negthinspace(x) < b_k \big) \Big)
	\Bigg).
\end{multline}
We reiterate, however, that the same is not true of the \emph{dual} property to 
\Cref{prob:tll_verification_multi}. Consequently, \myalg~requires modification 
to verify such properties; this is a more or less straightforward procedure,  
but we defer this to future work, as noted above.




\section{\myalg: Theory} 
\label{sec:algorithm}
In this section, we develop the theoretical underpinnings of \myalg. As noted 
in \Cref{sec:problem}, the essential insight of our algorithm is captured by 
our solutions to problems \Cref{prob:scalar_upper_bound} and 
\Cref{prob:scalar_lower_bound}. Thus, this section is organized primarily 
around solving sub-problems of these forms; at the end of this section, we will 
show how to combine these results into a verification algorithm for 
\Cref{prob:tll_verification_multi}, and then we will analyze the overall 
computational complexity of \myalg.

\subsection{Verifying {\Cref{prob:scalar_upper_bound}}} 
\label{sub:verifying_upper_bound}
\Cref{prob:scalar_upper_bound}, as stated above, regards the TLL NN to be 
verified merely as a map from inputs to outputs; this is the behavior that we 
wish to verify, after all. However, this point of view obscures the 
considerable semantic structure intrinsic to the neurons in a TLL NN. In 
particular, recall that $\nn\subarg{\Xi_{\scriptscriptstyle N, M}}$ implements 
the following function, which was derived from the Two-Level Lattice 
representation of CPWAs -- see \Cref{sub:two_layer_lattice_neural_networks} and 
\cite{FerlezAReNAssuredReLU2020, TarelaRegionConfigurationsRealizability1999}:
\begin{equation}
\label{eq:tll_implementation}
	\nn\subarg{\Xi_{\scriptscriptstyle N, M}} (x) = \max_{1 \leq j \leq M} 
	\min_{i \in s_j} \mathscr{L}_i\subarg{W_\ell, b_\ell}(x).
\end{equation}
In \eqref{eq:tll_implementation}, the sets $s_j$ are the \emph{selector sets} 
of $\nn\subarg{\Xi_{\scriptscriptstyle N, M}}$ and the 
$\mathscr{L}_i\subarg{W_\ell,b_\ell}$ are the \emph{local linear functions} of 
$\nn\subarg{\Xi_{\scriptscriptstyle N, M}}$; both terminologies are formally 
defined in \Cref{def:scalar_tll}. Upon substituting 
\eqref{eq:tll_implementation} into \eqref{eq:scalar_upper_bound_formula}, we 
obtain the following, far more helpful representation of the property expressed 
in \Cref{prob:scalar_upper_bound}:
\begin{equation}
\label{eq:tll_substitution_upper}
	\exists x \in P_X \subset \mathbb{R}^n . \big( 
		\max_{1 \leq j \leq M} \min_{i \in s_j} \mathscr{L}_i\subarg{W_\ell, b_\ell}(x) 
		> b 
	\big).
\end{equation}
Literally, \eqref{eq:tll_substitution_upper} compares the output property of 
interest, $b \in \mathbb{R}$, with a combination of \emph{real-valued} 
$\max/\min$ operations applied to scalar affine functions. Crucially,  that 
comparison is made using the usual order relation on $\mathbb{R}$, $\geq$,  
which is exactly the same order relation upon which the $\max$ and $\min$ 
operations are based.

Thus, it is possible to simplify \eqref{eq:tll_substitution_upper} as follows. 
First note that the result of the $\max$ operation in 
\eqref{eq:tll_substitution_upper} can exceed $b$ on $P_X$ if and only if:
\begin{equation}
\label{eq:dist_inequality_to_mins}
	\exists x \in P_X . \exists
	j \in \{1, \dots, M\} . \Big( \min_{i \in s_j} 
\mathscr{L}(W_\ell, b_\ell)(x) > b \Big).
\end{equation}
In turn, the result of any one of the $\min$ operations in 
\eqref{eq:tll_substitution_upper} can exceed $b$ on $P_X$, and hence make 
\eqref{eq:dist_inequality_to_mins} true, if and only if
\begin{equation}
\label{eq:dist_inequality_to_lins}
	\exists x \in P_X . \forall i \in s_j . \Big(
		\mathscr{L}_i\subarg{W_\ell,b_\ell}(x)
		> b
	\Big).
\end{equation}
In particular, \eqref{eq:dist_inequality_to_lins} is actually an intersection 
of \emph{half spaces}, some open and some closed: the open half spaces come 
from local linear functions that violate the property; and the closed 
half-spaces come from the input property, $P_X$ (see 
\Cref{prob:tll_verification_multi}). Moreover, there are at most $M$ such 
intersections of relevance to \Cref{prob:scalar_upper_bound}: one for each of 
the $j = 1, \dots, M$ such $\min$ operations present in 
\eqref{eq:tll_substitution_upper}. Finally, note that linear feasibility 
problems consisting entirely of \emph{non-strict} inequality constraints are 
easy to solve: this suggests that we should first amend the $>b$ inequality 
with $\geq b$ before proceeding.

Formally, these ideas are captured in the following proposition.
\begin{proposition}
\label{prop:solving_upper_bound}
	Consider an instance of \Cref{prob:scalar_upper_bound}. Then that instance 
	is \textbf{UNSAT} if and only if the set:
	\begin{equation}\label{eq:prop_prob1b_main_formula}
		\bigcup_{j = 1, \dots, M}
		\left(
			\bigcap_{i \in s_j} \{x \negthinspace: \mathscr{L}_i\subarg{W_\ell,b_\ell} > b \}
			\cap
			\bigcap_{i=1}^{\mathsf{N}_X} \{x:\negthinspace \ell_{X,i}(x) \leq  0\} \negthinspace
		\right) \neq \emptyset.
	\end{equation}
	Or equivalently, if for at least one of the $j = 1, \dots, M$, the linear 
	feasibility problem specified by the constraints
	\begin{equation}
		F_j \triangleq
		\begin{cases}
			\mathscr{L}_{i_1}\subarg{W_\ell,b_\ell}(x) \geq b \\
			\vdots \\
			\mathscr{L}_{i_{|s_j|}}\subarg{W_\ell,b_\ell}(x) \geq b
		\end{cases}
		\bigwedge \;\;
		\begin{cases}
			\ell_{X,1}(x) \leq 0 \\
			\vdots \\
			\ell_{X,\mathsf{N}_X}(x) \leq 0
		\end{cases}
	\end{equation}
	is feasible, \textbf{and} one of the following conditions is true: 
	\begin{itemize}
		\item it has non-empty interior; or

		\item there is a feasible point that lies only on some subset of the 
			$\{\ell_{X,i} : i = 1, \dots, \mathsf{N}_X\}$.
	\end{itemize}
\end{proposition}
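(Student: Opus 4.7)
The plan is to prove both equivalences by directly exploiting the Two-Level Lattice representation of $\nn\subarg{\Xi_{\scriptscriptstyle N,M}}$. For the first equivalence, I would substitute \eqref{eq:tll_implementation} into the UNSAT condition \eqref{eq:scalar_upper_bound_formula}, obtaining \eqref{eq:tll_substitution_upper}. Because $\max$ and $\min$ are applied to real-valued functions and compared to $b$ using the same order relation on $\mathbb{R}$, the existential quantifier over $x$ distributes over the $\max$ as a disjunction over $j$, giving \eqref{eq:dist_inequality_to_mins}. Then $\min_{i \in s_j} \mathscr{L}_i\subarg{W_\ell,b_\ell}(x) > b$ holds iff $\mathscr{L}_i\subarg{W_\ell,b_\ell}(x) > b$ simultaneously for every $i \in s_j$, yielding \eqref{eq:dist_inequality_to_lins}. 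Intersecting with the $P_X$ half-space constraints and unioning over $j$ gives exactly \eqref{eq:prop_prob1b_main_formula}.

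For the second equivalence, I need to rewrite the open half-space constraints $\mathscr{L}_i\subarg{W_\ell,b_\ell}(x) > b$ as closed constraints $\mathscr{L}_i\subarg{W_\ell,b_\ell}(x) \geq b$ in order to reduce to the standard linear feasibility problem $F_j$. The forward direction is immediate: any $x$ witnessing nonemptiness of the strict set is a feasible point of $F_j$, and since none of the $\mathscr{L}_i\subarg{W_\ell,b_\ell}(x) = b$ hyperplanes is active at $x$, then $x$ is either an interior point of $P_X$ (in which case a small ball around $x$ keeps all constraints strict, so $F_j$ has nonempty interior) or $x$ is active only on some subset of the $\ell_{X,k} = 0$ hyperplanes. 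For the reverse direction, if $F_j$ has nonempty interior then any interior point satisfies every constraint strictly and hence lies in the open set \eqref{eq:prop_prob1b_main_formula}; and if the alternative side condition holds, a feasible point of $F_j$ active only on some $\ell_{X,k}$ automatically satisfies $\mathscr{L}_i\subarg{W_\ell,b_\ell}(x) > b$ for all $i \in s_j$, so it witnesses nonemptiness of the strict set.

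The main obstacle is the strict-to-non-strict conversion: mere feasibility of $F_j$ is not sufficient, because $F_j$ could be feasible only at points that happen to lie on some $\mathscr{L}_i\subarg{W_\ell,b_\ell} = b$ hyperplane, and such points fail the strict inequality. The carefully stated side conditions are designed precisely to rule out this pathology, by distinguishing between the ``property'' hyperplanes $\{\mathscr{L}_i\subarg{W_\ell,b_\ell} = b : i \in s_j\}$ (which must not be active) and the ``polytope'' hyperplanes $\{\ell_{X,k} = 0\}$ (which may be active). Handling this distinction carefully, and verifying that in each case at least one witness point avoids all $\mathscr{L}_i\subarg{W_\ell,b_\ell} = b$ hyperplanes, is the nontrivial part of the argument.
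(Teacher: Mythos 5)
Your proposal is correct and follows essentially the same route as the paper: the paper's proof simply refers back to the quantifier-distribution manipulations in \eqref{eq:dist_inequality_to_mins}--\eqref{eq:dist_inequality_to_lins} for the first equivalence and states that the two side conditions ``exhaust the possibilities'' for how the strict constraints can participate in the feasible set of $F_j$. Your write-up fills in exactly those details, correctly isolating the strict-versus-non-strict issue as the only nontrivial point.
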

\begin{proof}
	The first claim follows immediately from the manipulations described in 
	\eqref{eq:dist_inequality_to_mins} and \eqref{eq:dist_inequality_to_lins}. 
	The second claim merely exhausts the possibilities for how the constraints 
	$\mathcal{L}_i\subarg{W_\ell,b_\ell}(x) > b$ can participate in a feasible 
	set for the linear program given by $F_j$.
\end{proof}
\begin{remark}
\label{rem:lattice_to_set_upper_bound}
	The conclusion of \Cref{prop:solving_upper_bound} also has the following 
	important interpretation: the $\exists x . ( \dots > b)$ property can be 
	seen to ``distribute across'' the $\max/\min$ operations in 
	\eqref{eq:tll_substitution_upper}, and upon doing so, it converts the 
	lattice operation $\max$ into \emph{set union} and the lattice operation 
	$\min$ into \emph{set intersection}. Furthermore, since a TLL NN is 
	constructed from two levels of lattice operations applied to affine 
	functions, the innermost lattice operation of $\min$  is converted into a 
	set intersection of half-spaces --- i.e., a linear feasibility problem.
\end{remark}

Of course \Cref{prop:solving_upper_bound} also suggests a natural and obvious 
algorithm to verify an instance of \Cref{prob:scalar_upper_bound}. The 
pseudocode for this algorithm appears as the function \texttt{verifyScalarUB} 
in \Cref{alg:solve_problem_upper_bound}, and its correctness follows directly  
from \Cref{prop:solving_upper_bound}. In particular, \texttt{verifyScalarUB} 
simply evaluates the feasibility of each set of constraints $F_j, j = 1,  
\dots, M$ in turn, until either a feasible problem is found or the list is  
exhausted. Then for each such feasible $F_j$, \texttt{verifyScalarUB} attempts 
to find an interior point of the feasible set to reconcile it with the desired 
inequalities in \eqref{eq:prop_prob1b_main_formula}; failing that, it searches 
for a vertex of the feasible set where no output property constraints are 
active. In practice, these operations can be combined by operating on the 
feasible point returned by the original feasibility program: an LP can be used 
to maximize the value of each constraint activate in order to explore  adjacent 
vertices. Note further that \texttt{verifyScalarUB} may not need to execute all 
$M$ possible linear  programs for properties that are \textbf{UNSAT}: it can 
terminate early on the first ``satisfied'' linear program found.

\IncMargin{0.5em}
\begin{algorithm}[!t]


\SetKwData{false}{False}
\SetKwData{true}{True}
\SetKwData{feas}{Feasible}

\SetKwData{constraints}{constraints}
\SetKwData{status}{status}
\SetKwData{sol}{sol}
\SetKwData{sollist}{solnList}
\SetKwData{actconstr}{actConstr}

\SetKwFunction{append}{append}
\SetKwFunction{verifyScalarLB}{verifyScalarUB}
\SetKwFunction{solvefeas}{SolveLinFeas}
\SetKwFunction{findint}{FindInt}
\SetKwFunction{all}{all}

\SetKw{Break}{break}
\SetKw{not}{not}
\SetKw{foriter}{for}
\SetKw{OR}{or} 

\SetKwInOut{Input}{input}
\SetKwInOut{Output}{output}

\Input{
\hspace{3pt}$b \in \mathbb{R}$, a upper bound to verify \\
\hspace{3pt}$\Xi_{\scriptscriptstyle N,M}$, parameters of a TLL NN to verify \\
\hspace{3pt}$L_X = \{\ell_{X,1}, \dots, \ell_{X,\mathsf{N}_X}\}$, affine functions \\
\hspace{15pt}specifying an input constraint polytope, $P_X$\\
}
\Output{
\hspace{3pt}Boolean ($\mathtt{True = SAT}; \mathtt{False = UNSAT}$)
}
\BlankLine
\SetKwProg{Fn}{function}{}{end}%
\Fn{\verifyScalarLB{$b$, $\Xi_{\scriptscriptstyle N,M}$, $L_X$}}{

	\For{$j = 1, \dots, M$}{

		\constraints $\leftarrow$ $[\big(\mathscr{L}_i\subarg{W_\ell,b_\ell}(x) \geq b\big) $ \foriter $i \in s_j]$

		\constraints.\append{$[ \ell_{X,1}(x)\leq~0, \dots, \ell_{X,N_X}(x)\leq~0 ]$}

		(\sol, \status) $\leftarrow$ \solvefeas{\constraints}


		\If{\status $==$ \feas}{
			\If{\all{$[ \mathscr{L}_i\subarg{W_\ell,b_\ell}($ \sol $) > b$  \foriter  $i = 1\dots N]$} \\
			\OR \findint{\constraints} == \true
			}{
				\Return \false
			}


		}
	}
	\Return \true
}
\caption{\texttt{verifyScalarUB}; i.e., solve \Cref{prob:scalar_upper_bound}}
\label{alg:solve_problem_upper_bound}
\end{algorithm}
\DecMargin{0.5em}


\subsection{Verifying {\Cref{prob:scalar_lower_bound}}} 
\label{sub:verifying_lower_bound}
Naturally, we start our consideration of \Cref{prob:scalar_lower_bound} in very 
much the same way as \Cref{prob:scalar_upper_bound}. However, given that 
\Cref{prob:scalar_lower_bound} and \Cref{prob:scalar_upper_bound} are in some 
sense \emph{dual}, the result is not nearly as convenient. In particular, 
substituting \eqref{eq:tll_implementation} into 
\eqref{eq:scalar_lower_bound_formula}, and attempting carry out the same 
sequence of manipulations that led to \Cref{prop:solving_upper_bound} results 
in the following formula:
\begin{equation}
\label{eq:distribute_set_lower}
 	\bigcap_{j = 1, \dots, M} \left(
 		\bigcup_{i \in s_j} \{x : \mathscr{L}_i\subarg{W_\ell,b_\ell}(x) < a \}
 		\cap
 		\bigcap_{i=1}^{\mathsf{N}_X} \{x:\negthinspace \ell_{X,i}(x) \leq  0\} 
 	\right) \neq \emptyset,
 \end{equation} 
which has the same truth value as formula 
\eqref{eq:scalar_lower_bound_formula}. Unfortunately, 
\eqref{eq:distribute_set_lower} is not nearly as useful as the result in 
\Cref{prop:solving_upper_bound}: under the ``dual'' output constraint $< a$, 
set intersection and set union are switched relative to 
\Cref{prop:solving_upper_bound}. As a consequence, 
\eqref{eq:distribute_set_lower} is not itself a direct formulation in terms of 
intersections of half-spaces --- i.e., linear feasibility problems.

Nevertheless, rearranging \eqref{eq:distribute_set_lower} into the  
union-of-half-space-int-\\ ersections form of \Cref{prop:solving_upper_bound} 
is possible and profitable. Using basic set operations, it is possible to 
rewrite \eqref{eq:distribute_set_lower} in a union-of-intersections form as  
follows (the set intersection $P_X = \cap_{i = 1}^{\mathsf{N}_X} \{x: 
\ell_{X,i}(x) \leq 0\}$ is moved outside the outer union for convenience):
\begin{equation}
\label{eq:lower_bound_union_of_intersections}
	P_X \cap
	\bigcup_{
		\begin{smallmatrix}
			(i_1, \dots, i_M) \\
			\in s_1 \times \dots \times s_M
		\end{smallmatrix}
	}
	\bigcap_{k=1, \dots, M}
	\{x : \mathscr{L}_{i_k}\subarg{W_\ell,b_\ell}(x) < a\} \neq \emptyset.
\end{equation}
\noindent By construction, \eqref{eq:lower_bound_union_of_intersections} again 
has the same truth value as  \eqref{eq:scalar_lower_bound_formula}, but it is 
now in the desired form. In particular, it is verifiable by evaluating a finite 
number of half-space intersections much like \Cref{prop:solving_upper_bound}.

Unfortunately, as a result of this rearrangement, the total number of mutual 
half-space intersections -- or intersection ``terms'' -- has grown from $M$ to 
$ \prod_{j=1}^{M} |s_j|$, where $|s_j|$ is the cardinality of the selector set, 
$s_j$. In particular, this number can easily exceed $M$: for example, if each 
of the $s_j$ has exactly two elements, then there are $2^M$ total mutual 
intersection terms. Thus, verifying 
\eqref{eq:lower_bound_union_of_intersections} in its current form would appear 
to require (in the worst case) exponentially more linear feasibility programs 
than the verifier we proposed for \Cref{prob:scalar_upper_bound}. This 
situation is not only non-ideal in terms  of run-time: it would also seem to 
contradict \cite{FerlezBoundingComplexityFormally2020}, which describes an 
algorithm with polynomial-time complexity in $N$ and $M$ --- and that algorithm 
is after all applicable to \emph{more general} output properties.

Fortunately, however, there is one aspect not emphasized in this analysis so 
far: these intersection terms consist of \emph{half-spaces}, and moreover, each 
of the half-spaces therein is specified by a hyperplane chosen from among a 
single, common group of $N$ hyperplanes. This will ultimately allow us to 
identify each non-empty intersection term in 
\eqref{eq:lower_bound_union_of_intersections} with a \emph{full,  
$n$-dimensional region} from this hyperplane arrangement, and by 
\Cref{thm:arrangement_regions_bound} in 
\Cref{sub:hyperplanes_and_hyperplane_arrangements}, there are at most 
$O(N^n/n!)$ such regions. Effectively, then, the \emph{geometry} of this 
hyperplane arrangement (with $N$ hyperplanes in dimension $n$) prevents 
exponential growth in the number intersection terms relevant to the truth of 
\eqref{eq:lower_bound_union_of_intersections}: indeed, the polynomial growth, 
$O(N^n/n!)$, means that many of those intersection terms cannot correspond to 
valid regions in the arrangement\footnote{Of course these results apply when 
$n$ is fixed; see also \cite{FerlezBoundingComplexityFormally2020}, and the  
comments therein pertaining to NN verification encodings of 3-SAT problems  
\cite{KatzReluplexEfficientSMT2017a}.}.

In particular, consider the following set of affine functions, which in turn 
defines an arrangement of hyperplanes in $\mathbb{R}^n$:
\begin{equation}
\label{eq:prob1b_hyperplane_arrangement}
	l_i^a \triangleq \mathscr{L}_i\subarg{W_\ell,b_\ell} - a.
\end{equation}
Let $\mathfrak{h}_a \triangleq \cup_{i = 1}^N \{ H^0_{l_i^b} \}$ denote the 
corresponding arrangement. Now, consider any index $(i_1, \dots, i_M) \in s_1 
\times \dots \times s_M$ specifying an intersection term in 
\eqref{eq:lower_bound_union_of_intersections}, and  suppose without loss of 
generality that $i_1, \dots, i_K$ are the only unique indices therein (an 
assumption we carry forward). Then using the notation $H^{-1}$ introduced in 
\Cref{def:hyperplane}, it is straightforward to write:
\begin{equation}\label{eq:prob1b_hyperplane_subset}
	\bigcap_{k=1, \dots, M} \hspace{-5pt}
	\{x : \mathscr{L}_{i_k}\subarg{W_\ell,b_\ell}(x) < a \}
	=
	\bigcap_{i = 1}^K 
	H^{-1}_{l_i^a},
\end{equation}
and where $l_i^a$ is as defined in \eqref{eq:prob1b_hyperplane_arrangement}.  
As a consequence, we conclude:
\begin{multline}\label{eq:sub_region_nonempty}
	\bigcap_{k=1, \dots, M} \hspace{-5pt}
	\{x : \mathscr{L}_{i_k}\subarg{W_\ell,b_\ell}(x) < a \} \neq \emptyset \\
	\Leftrightarrow
	\exists (\mathfrak{s}^\prime : \{l_i^a: i = 1, \dots, K\} \rightarrow \{-1\}) . \Big(
		\bigcap_{i = 1}^K H^{\mathfrak{s}^\prime(l_i^a)}_{l_i^a} \neq \emptyset \Big).
\end{multline}
Although it seems unnecessary to introduce the function $\mathfrak{s}^\prime$, 
this notation directly connects \eqref{eq:prob1b_hyperplane_subset} to 
full-dimensional regions of the arrangement $\mathfrak{h}_a$. Indeed, it states 
that the intersection term of interest is non-empty if and only if there is a 
full-dimensional region in the hyperplane arrangement whose index function 
$\mathfrak{s} : \{l_i^a: i = 1, \dots N\} \rightarrow  \{-1,+1\}$ agrees with 
one of the partial indexing functions $\mathfrak{s}^\prime$ described in 
\eqref{eq:sub_region_nonempty}. \emph{More simply, said intersection term is 
non-empty if and only if it contains a  
full-dimensional region from the arrangement $\mathfrak{h}_a$; such a region 
can be said to ``witness'' the non-emptiness  of the intersection term.} This 
idea is illustrated in \Cref{fig:prob1b_illustration}.

\begin{figure}[t]
	\centering %
	\includegraphics[width=0.4\textwidth]{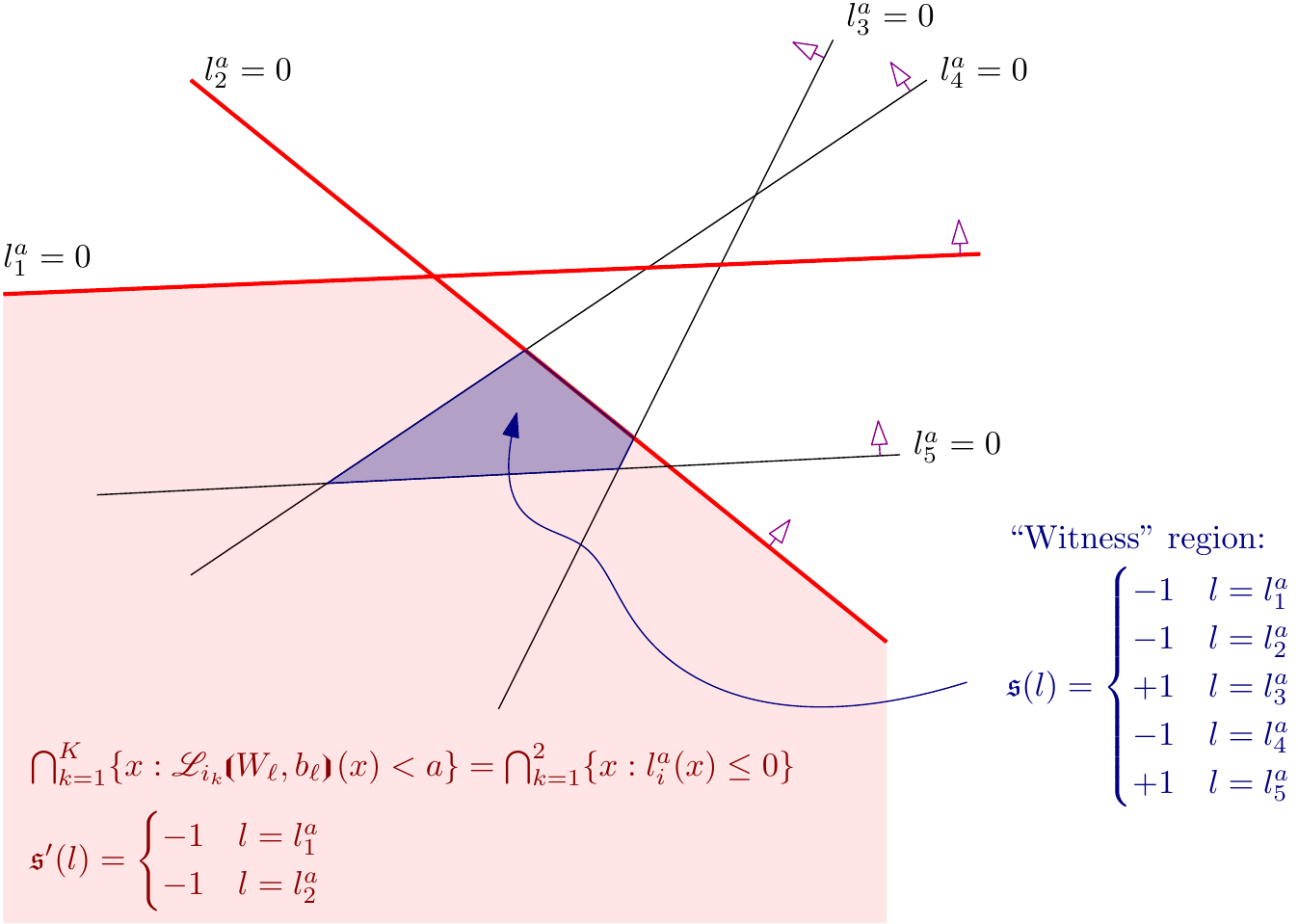} %
	\caption{Illustration of identifying a non-empty intersection term from \eqref{eq:lower_bound_union_of_intersections} with a full-dimensional region from the hyperplane arrangement $\mathfrak{h}_a$; positive half-spaces are indicated with blue arrows. Intersection term and defining half-spaces shown in red; ``witness'' region from $\mathfrak{h}_a$ shown in blue. Input constraints, $P_X$, omitted for clarity.} %
	\label{fig:prob1b_illustration} %
\end{figure}

Formally, we have the following proposition.
\begin{proposition}
\label{prop:solving_lower_bound}
	Consider an instance of \Cref{prob:scalar_lower_bound}. Then that instance 
	is \textbf{UNSAT} if and only if the set:
	\begin{equation}
		P_X \cap
		\bigcup_{
			\begin{smallmatrix}
				(i_1, \dots, i_M) \\
				\in s_1 \times \dots \times s_M
			\end{smallmatrix}
		}
		\bigcap_{k=1, \dots, M}
		\{x : \mathscr{L}_{i_k}\subarg{W_\ell,b_\ell}(x) < a \} \neq \emptyset.
	\end{equation}
	And this is the case if and only if there exists an index $(i_1, \dots, 
	i_M) \in s_1 \times \dots \times s_M$ with distinct elements denoted by 
	$j_1,  \dots, j_K \in \{1, \dots, M\}$ such that the following holds:
	\begin{itemize}
		\item there exists a region $R$ in $\mathfrak{h}_a$, specified by  
			$\mathfrak{s}_R: \{l_i^a : i = 1, \dots, N\} \rightarrow \{-1,0,  
			+1\}$, such that:
			\begin{equation}
				\mathfrak{s}_R(l^a_{j_k}) = -1 \text{ for all } k = 1, \dots, K
			\end{equation}
			and
			\begin{equation}
				R \cap P_X \neq \emptyset.
			\end{equation}
	\end{itemize}
	If such a region $R$ exists, then it is said to \textbf{witness} the 
	non-emptiness of the corresponding intersection term with the index $(i_1, 
	\dots, i_M)$.
\end{proposition}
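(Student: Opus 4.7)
The plan is to prove both equivalences in sequence, first the set-theoretic rewrite and then the geometric identification of intersection terms with full-dimensional regions of the arrangement $\mathfrak{h}_a$.

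For the first equivalence, I would substitute the TLL representation \eqref{eq:tll_implementation} into the property formula \eqref{eq:scalar_lower_bound_formula}. Using the real-valued identities $\max_j f_j(x) < a \Leftrightarrow \bigwedge_j f_j(x) < a$ and $\min_{i \in s_j} g_i(x) < a \Leftrightarrow \bigvee_{i \in s_j} g_i(x) < a$, the property becomes $\exists x \in P_X .\, \bigwedge_{j=1}^M \bigvee_{i \in s_j} \mathscr{L}_i\subarg{W_\ell,b_\ell}(x) < a$. A propositional distributive rewrite (conjunction-over-disjunction into disjunction-over-conjunction) converts this into $\exists x \in P_X .\, \bigvee_{(i_1, \dots, i_M) \in s_1 \times \dots \times s_M} \bigwedge_{k=1}^M \mathscr{L}_{i_k}\subarg{W_\ell,b_\ell}(x) < a$, and pushing the outer disjunction past $\exists x$ and $P_X$ produces precisely the union-of-intersections nonemptiness in the statement. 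This is the content of \eqref{eq:lower_bound_union_of_intersections}, derived formally.

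For the second equivalence, I would fix an index $(i_1, \dots, i_M)$ with distinct elements $j_1, \dots, j_K$, so that the corresponding intersection term equals $T \triangleq \bigcap_{r=1}^K H^{-1}_{l^a_{j_r}}$ (collapsing repeated factors). The ``if'' direction is immediate: any $x \in R \cap P_X$ satisfies $\mathscr{L}_{j_r}(x) < a$ for every $r$ because $\mathfrak{s}_R(l^a_{j_r}) = -1$ forces $R \subseteq H^{-1}_{l^a_{j_r}}$, so $x$ lies in $T \cap P_X$ and hence in the overall union. For the ``only if'' direction, starting from $T \cap P_X \neq \emptyset$ for some tuple, I would observe that $T$ is a nonempty open convex set and that the finite union $\bigcup_{i=1}^N H^0_{l^a_i}$ has Lebesgue measure zero, so $T \setminus \bigcup_{i=1}^N H^0_{l^a_i}$ is open and dense in $T$ and decomposes into full-dimensional regions of $\mathfrak{h}_a$. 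Each such region $R$ is automatically contained in $T$: if $R$ met $T$ but were not contained in it, then $R$ would have to cross some bounding hyperplane $H^0_{l^a_{j_r}}$ of $T$, contradicting full-dimensionality. Consequently, every such $R$ satisfies $\mathfrak{s}_R(l^a_{j_r}) = -1$ for all $r$, yielding the first required property.

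To secure $R \cap P_X \neq \emptyset$, I would pick $x \in T \cap P_X$ and produce a nearby point $x' \in T \cap \operatorname{int}(P_X)$ lying in the arrangement complement; such $x'$ lies in a unique full-dimensional region $R$ that meets $P_X$, completing the witness. The main obstacle is this last perturbation step when $x \in \partial P_X$ or when $x$ itself lies on some hyperplane of $\mathfrak{h}_a$: it requires that $P_X$ be full-dimensional so that every neighborhood of $x$ meets $\operatorname{int}(P_X)$, after which openness of $T$ and measure-zero-ness of the hyperplane union let us further avoid $\bigcup_{i=1}^N H^0_{l^a_i}$. Under the standing assumption that $P_X$ is a nondegenerate polytope this is routine; a formal proof would simply note that $\operatorname{int}(P_X) \cap T$ is a nonempty open set in $\mathbb{R}^n$ from which a closed measure-zero subset can always be excised.
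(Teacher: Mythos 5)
Your proof follows essentially the same route as the paper, whose own proof simply states that the result ``follows from the above manipulations'' --- namely the distributive rewrite of \eqref{eq:distribute_set_lower} into \eqref{eq:lower_bound_union_of_intersections} and the identification \eqref{eq:prob1b_hyperplane_subset}--\eqref{eq:sub_region_nonempty} of intersection terms with regions of $\mathfrak{h}_a$; you have filled in those details correctly. One remark on your ``only if'' direction: the proposition allows $\mathfrak{s}_R$ to take values in $\{-1,0,+1\}$, so the witness region need not be full-dimensional. Given a point $x$ in the intersection term $T$ with $x \in P_X$, the region of $\mathfrak{h}_a$ determined by the sign vector $\big(\operatorname{sgn} l_1^a(x), \dots, \operatorname{sgn} l_N^a(x)\big)$ already satisfies both required conditions, since $x$ lies strictly inside each $H^{-1}_{l^a_{j_r}}$ and $x \in R \cap P_X$. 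This observation makes your measure-zero perturbation step unnecessary and, more importantly, removes the need to assume $P_X$ is full-dimensional --- an assumption the paper does not make, and without which your construction of a \emph{full-dimensional} witness can genuinely fail (for instance, if $P_X$ is contained in one of the hyperplanes of $\mathfrak{h}_a$, no full-dimensional region meets $P_X$ at all, yet the proposition remains true via a lower-dimensional region).
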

\begin{proof}
	The proof follows from the above manipulations. 
\end{proof}

\Cref{prop:solving_lower_bound} establishes a crucial identification between  
full-dimensional regions in a hyperplane arrangement and the non-empty 
intersection terms in \eqref{eq:lower_bound_union_of_intersections}, a 
verification formula equivalent to the satisfiability of 
\Cref{prob:scalar_lower_bound}. However, it is still framed in terms of 
individual indices of the form $(i_1, \dots, i_M)$, which are too numerous to 
enumerate for reasons noted above. Thus, converting 
\Cref{prop:solving_lower_bound} into a practical and fast algorithm to solve 
\Cref{prob:scalar_lower_bound} entails one final step: efficiently evaluating a 
\emph{full-dimensional} region in $\mathfrak{h}_a$ to determine if it matches 
\emph{any} index of the form form $(i_1, \dots, i_M) \in s_1 \times \dots 
\times s_M$. This will finally lead to \myalg's algorithm to verify an instance 
\Cref{prob:scalar_lower_bound} by enumerating the regions of $\mathfrak{h}_a$ 
instead of enumerating all of the indices in $s_1 \times \dots \times s_M$.

Predictably, \myalg~essentially takes a greedy approach to this problem. In 
particular, consider a full-dimensional region, $R \subset \mathbb{R}^n$,  of 
the hyperplane arrangement $\mathfrak{h}_a$, and suppose that $R$ is specified 
by the index function (see \Cref{def:hyperplane_region}):
\begin{equation}
	\mathfrak{s}_R : \{l_i^a | i = 1, \dots N\} \rightarrow \{-1,+1\}.
\end{equation}
According to \Cref{prop:solving_lower_bound}, $\mathfrak{s}_R$ will be a 
witness to a violation of \Cref{prob:scalar_lower_bound} if each of its 
negative hyperplanes (those assigned $-1$ by $\mathfrak{s}_R$) can be matched 
to \emph{any} element of one of the selector sets, $s_j$ $j=1,\dots,M$. Thus, 
to establish whether $\mathfrak{s}_R$ corresponds to a non-empty intersection 
term, we can iterate over its negative hyperplanes, checking each one for 
membership in any one of the $s_j$. This iteration proceeds as long as each 
negative hyperplane is found to be an element of \emph{some} $s_j$. If all 
negative hyperplanes of $R$ can be matched in this way, then the region $R$ is 
a witness to a violation of \Cref{prob:scalar_lower_bound} as per 
\Cref{prop:solving_lower_bound}. If, however, a negative hyperplane of $R$ is 
found to belong to \emph{no} $s_j$, then the iteration terminates, since the 
region $R$ cannot be a witness to a violation of 
\Cref{prob:scalar_lower_bound}. This algorithm amounts to a \emph{greedy} 
matching of the negative hyperplanes of $R$, and it works by effectively 
examining the smallest intersection term to which the region $R$ can be a  
witness. The pseudocode for this algorithm, with an outer loop to iterate over 
regions of $\mathfrak{h}_a$, appears as \texttt{verifyScalarLB} in 
\Cref{alg:solve_problem_lower_bound}.

\IncMargin{0.5em}
\begin{algorithm}[!t]


\SetKwData{false}{False}
\SetKwData{true}{True}
\SetKwData{feas}{Feasible}

\SetKwData{constraints}{constraints}
\SetKwData{status}{status}
\SetKwData{sol}{sol}
\SetKwData{sollist}{solnList}
\SetKwData{actconstr}{actConstr}

\SetKwData{ha}{h\_a}
\SetKwData{reg}{reg}
\SetKwData{neghypers}{negHypers}
\SetKwData{i}{i}
\SetKwData{j}{j}

\SetKwFunction{append}{append}
\SetKwFunction{verifyScalarLB}{verifyScalarLB}
\SetKwFunction{solvefeas}{SolveLinFeas}
\SetKwFunction{findint}{FindInt}
\SetKwFunction{all}{all}
\SetKwFunction{regions}{Regions}
\SetKwFunction{L}{l}
\SetKwFunction{getneghypers}{NegativeHyperplanes}

\SetKw{Break}{break}
\SetKw{NOT}{not}
\SetKw{foriter}{for}
\SetKw{OR}{or} 
\SetKw{IN}{in}
\SetKw{CONT}{continue}

\SetKwInOut{Input}{input}
\SetKwInOut{Output}{output}

\Input{
\hspace{3pt}$a \in \mathbb{R}$, a lower bound to verify \\
\hspace{3pt}$\Xi_{\scriptscriptstyle N,M}$, parameters of a TLL NN to verify \\
\hspace{3pt}$L_X = \{\ell_{X,1}, \dots, \ell_{X,\mathsf{N}_X}\}$, affine functions \\
\hspace{15pt}specifying an input constraint polytope, $P_X$\\
}
\Output{
\hspace{3pt}Boolean ($\mathtt{True = SAT}; \mathtt{False = UNSAT}$)
}
\BlankLine
\SetKwProg{Fn}{function}{}{end}%
\Fn{\verifyScalarLB{$a$, $\Xi_{\scriptscriptstyle N,M}$, $L_X$}}{


	\ha $\leftarrow$ $[ \mathcal{L}_i\subarg{W_\ell,b_\ell}(x) - a\;\; $ \foriter $~ i = 1 \dots N~]$

	\For{\reg \IN \regions{\ha}}{

		\If{\reg $\cap P_X == \emptyset$}{
			\CONT \reg
		}

		\neghypers $\leftarrow$ \getneghypers{\reg}

		\For{\i \IN \neghypers}{

			\For{\j $= 1 \dots M$}{

				\If{\L{\i,a} $\not\in s_{\text{\j}}$}{

					\tcc{This region cannot be a witness to violation; go to next region}

					\CONT \reg

				}

			}

		\tcc{This region witnesses a violation; return}	

		\Return \false

		}
	}
	\Return \true
}
\caption{\texttt{verifyScalarLB}; i.e., solve \Cref{prob:scalar_lower_bound}}
\label{alg:solve_problem_lower_bound}
\end{algorithm}
\DecMargin{0.5em}




\subsection{On the Complexity of \myalg} 
\label{sub:on_the_complexity_of_myalg}
Given the remarks prefacing equation \eqref{eq:prob1_as_1ab}, it suffices to 
consider the complexity of \Cref{prop:solving_upper_bound} and 
\Cref{prob:scalar_lower_bound} individually. To simplify the notation in this 
section, we denote the complexity of running a linear program with $N$ 
constraints in $n$ variables by $\mathtt{LP}(N,n)$. \textbf{Note also: we 
consider complexities for a fixed $n$.}

\subsubsection{Complexity of \Cref{prob:scalar_upper_bound}} 
\label{ssub:complexity_of_scalar_upper_bound}
Analyzing the complexity of \texttt{verifyScalarUB} in 
\Cref{alg:solve_problem_upper_bound} is straightforward. There are $M$ total 
$\min$ (or intersection) terms, and each of these requires: one LP to check for 
$\geq b$ feasibility (line 5 of \Cref{alg:solve_problem_upper_bound}); followed 
by at most $N$ LPs to find an interior point (line 8 of 
\Cref{alg:solve_problem_upper_bound}). Thus, the complexity of  
\texttt{verifyScalarUB} is bounded by the following:
\begin{equation}
	O(
		M \cdot N \cdot \mathtt{LP}(N + \mathsf{N}_X, n).
	)
\end{equation}

\subsubsection{Complexity of \Cref{prob:scalar_lower_bound}} 
\label{ssub:complexity_of_scalar_lower_bound}
Analyzing the runtime complexity of \texttt{verifyScalarLB} in 
\Cref{alg:solve_problem_lower_bound} is also more or less straightforward, 
given an algorithm that enumerates the regions of a hyperplane arrangement.  
\myalg~uses an algorithm very similar to the reverse search algorithm described 
in \cite{AvisReverseSearchEnumeration1996} and improved slightly in 
\cite{FerrezCutsZonotopesArrangements2001}. For a hyperplane arrangement 
consisting of $N$ hyperplanes in dimension $n$, that reverse search algorithm 
has a per-region complexity bounded by:
\begin{equation}\label{eq:per-region_enumeration_cost}
	O(
		N \cdot \mathtt{LP}(N,n)
	).
\end{equation}
By \Cref{thm:arrangement_regions_bound} in \Cref{sec:preliminaries}, there are 
at most $O(N^n / n!)$ such regions.

Indeed, the per-region complexity of the loops in \texttt{verifyScalarLB} is 
easily seen to be bounded by $M \cdot N^2$ operations per region. Thus, it 
remains to evaluate the complexity of checking the intersection $\mathtt{reg} 
\cap P_X ==  \emptyset$ (see line 4 of  \Cref{alg:solve_problem_lower_bound}); 
however, this only appears as a separate operation for pedagogical simplicity. 
\myalg~actually follows the technique in 
\cite{FerlezBoundingComplexityFormally2020} to achieve the same assertion: the 
hyperplanes describing $P_X$ are added to the arrangement $\mathfrak{h}_a$, and 
any region for which one of those hyperplanes satisfies 
$\mathfrak{s}(\ell_{X,i}) = +1$ is ignored. This can be done with the 
additional per-region complexity associated with the size of the  larger 
arrangement, but without increasing the number of regions evaluated beyond  
$O(N^n/n!)$. Thus, we claim the complexity of 
\Cref{alg:solve_problem_lower_bound} is bounded by:
\begin{multline}
\label{eq:prob1b_complexity}
	O\big(\;
		M \cdot N^2 \cdot (N + \mathsf{N}_X) \cdot \mathtt{LP}(N + \mathsf{N}_X,n)) \cdot (N^n / n!)
	\;\big) \\
	= O(M \cdot N^{n+3} \cdot \mathtt{LP}(N + \mathsf{N}_X,n) / n!).
\end{multline}

\subsubsection{Complexity of \myalg~Compared to \cite{FerlezBoundingComplexityFormally2020}} 
\label{ssub:complexity_compared_to_ferlezboundingcomplexityformally2020}
We begin by adapting the TLL verification complexity reported in \cite[Theorem 
3]{FerlezBoundingComplexityFormally2020} to the scalar TLLs and single output 
properties of \Cref{prob:scalar_upper_bound} and 
\Cref{prob:scalar_lower_bound}. In the notation of this paper, it is as  
follows:
\begin{equation}
	O(
		n \cdot M \cdot N^{2n + 3} \cdot \mathtt{LP}(N^2 + \mathsf{N}_X,n) / n!
	).
\end{equation}
It is immediately clear that \myalg~has a significant complexity advantage for 
either type of property. Even the more expensive \texttt{verifyScalarLB} has a 
runtime complexity of $\approx O(N^n)$ compared to $\approx O(N^{2n})$ for 
\cite[Theorem 3]{FerlezBoundingComplexityFormally2020}, and that doesn't even 
count the larger LPs used in \cite[Theorem 
3]{FerlezBoundingComplexityFormally2020}.




\section{Implementation} 
\label{sec:implementation}

\subsection{General Implementation} 
\label{sub:general_implementation}
The core algorithms of \myalg, \Cref{alg:solve_problem_upper_bound} and 
\Cref{alg:solve_problem_lower_bound}, are amenable to considerable parallelism. 
Thus, in order to make \myalg~as fast as possible, its implementation is 
focused on \emph{parallelism and concurrency} as much as possible.

With this in mind, \myalg~is implemented using a high-performance concurrency 
abstraction library for Python called \texttt{charm4py}  
\cite{Charm4pyCharm4pyDocumentation}. \texttt{charm4py} uses an actor model to 
facilitate concurrent programming, and it provides a number of helpful features 
to achieve good performance with relatively little programming effort. For 
example, it employs a cooperative scheduler to eliminate race-conditions, and 
it transparently offers the standard Python pass-by-reference semantics for 
function calls on the same Processing Element (PE). Moreover, it can be 
compiled to run on top of Message Passing Interface (MPI), which allows a 
single code base to scale from an individual multi-core computer to a 
multi-computer cluster. \myalg~was written with the intention of being deployed 
this way: it offers flexibility in how its core algorithms are assigned PEs, so 
as to take better take advantage of both compute and memory resources that are 
spread across multiple computers.

\subsection{Implementation Details for \Cref{alg:solve_problem_lower_bound}} 
\label{sub:implementation_details_for_alg:solve_problem_lower_bound}
Between the two core algorithms of \myalg, \Cref{alg:solve_problem_lower_bound} 
is the more challenging to parallelize. Indeed,  
\Cref{alg:solve_problem_upper_bound} has a trivial parallel implementation, 
since it consists primarily of a for loop over a known index set. In 
\Cref{alg:solve_problem_lower_bound}, it is the for loop over \emph{regions of 
a hyperplane} that makes parallelization non-trivial. Hence, this section 
describes how \myalg~parallelizes the region enumeration of a hyperplane 
arrangement for \Cref{alg:solve_problem_lower_bound}.

\begin{figure}[!t]
	\centering 
	\includegraphics[width=0.32\textwidth]{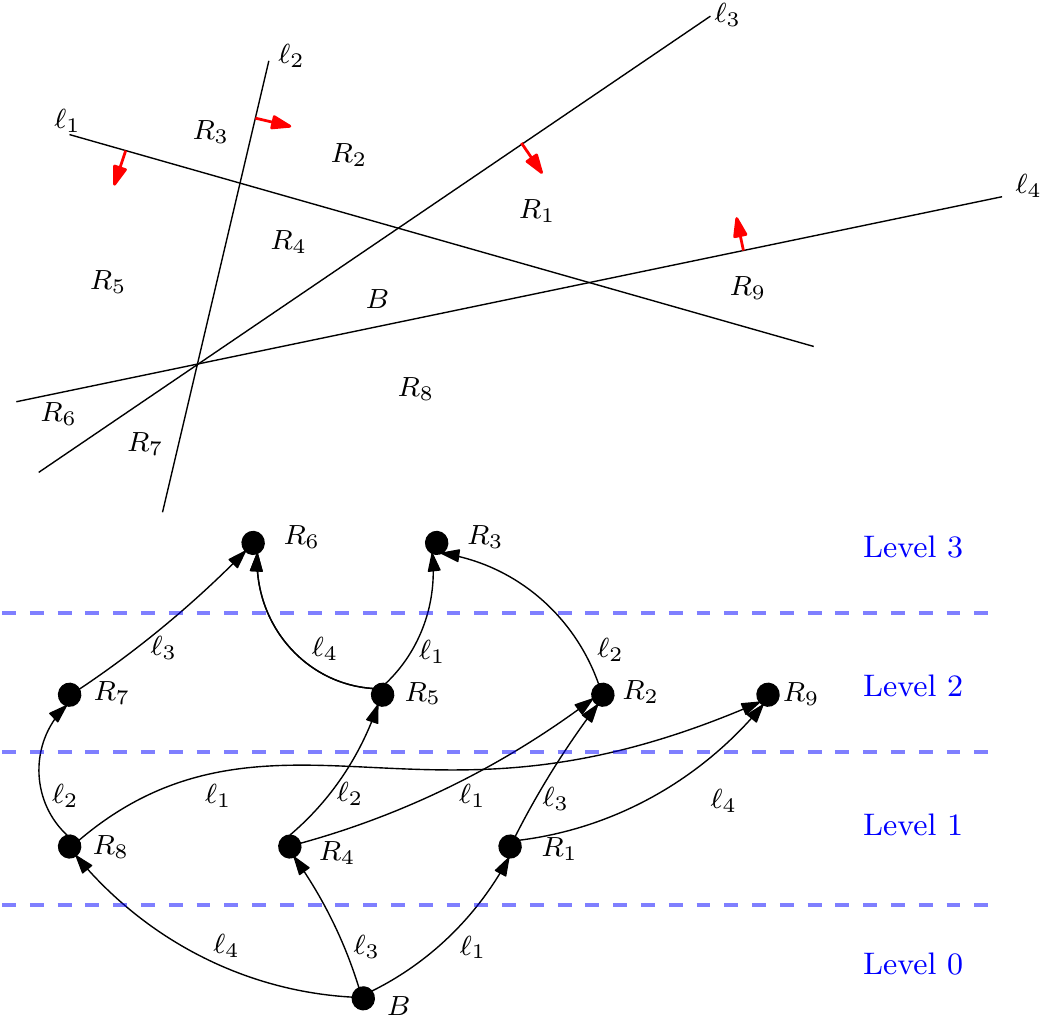} %
	\caption{(Top) A hyperplane arrangement $\{H^0_{\ell_1}, \dots, H^0_{\ell_4} \}$ with positive half spaces denoted by red arrows and regions $B, R_1, \dots, R_9$. (Bottom) The corresponding adjacency poset. }\vspace{-3mm}
	\label{fig:hyperplane_arrangement} %
\end{figure}

To describe the architecture of \myalg's implementation of hyperplane region  
enumeration, we first briefly introduce the well-known reverse search algorithm 
for the same task \cite{FerrezCutsZonotopesArrangements2001}, and the  
algorithm on which \myalg's implementation is loosely based. As its name 
suggests, it is a \emph{search} algorithm: that is, it starts from a known 
region of the arrangement and searches for regions adjacent to it, and then 
regions adjacent to those, and so on. In particular, though, the reverse search 
algorithm is fundamentally a depth-first search, and it uses a minimum-index 
rule to ensure that regions are not visited multiple times (i.e., functioning 
much as Bland's rule in simplex  solvers) 
\cite{FerrezCutsZonotopesArrangements2001, AvisReverseSearchEnumeration1996}. 
This type of search algorithm has the great benefit that it is memory 
efficient, since it tracks the current state of the search using only the 
memory required to track adjacency indices; even the information required to 
back-track over the current branch is computed rather than being stored  
\cite[pp. 10]{FerrezCutsZonotopesArrangements2001}. However, the memory 
efficiency of \cite{FerrezCutsZonotopesArrangements2001} comes at the expense 
of parallelizability, precisely because the search state is stored using 
variables that are incremented with each descent down a branch.

The natural way to parallelize such a search is to allow multiple concurrent 
search workers, but have them enter their independent search results into a 
common, synchronized hash table. Assuming an amortized $O(1)$ hash complexity, 
this solution doesn't affect the overall computational complexity; on the other 
hand, it comes with a steep memory penalty, since it could require storing all 
$O(N^n)$ regions in the worst case. However, there is nevertheless a way to 
efficiently enable and coordinate multiple search processes, while avoiding 
this excessive memory requirement.

To this end, \myalg~leverages a special property of the region adjacency 
structure in a hyperplane arrangement. In particular, the regions of a 
hyperplane arrangement can be organized into a \emph{leveled} adjacency poset 
\cite{EdelmanPartialOrderRegions1984}. That is, relative to any initial base 
region, all of the regions in the arrangement can be grouped according to the 
number of hyperplanes that were ``crossed'' in the process discovering them;  
the same idea is also implicit in \cite{FerrezCutsZonotopesArrangements2001, 
AvisReverseSearchEnumeration1996}. This leveled property of the adjacency poset 
is illustrated in \Cref{fig:hyperplane_arrangement}: the top pane shows a 
hyperplane arrangement with its regions labeled; the bottom pane depicts the 
region adjacency poset for this arrangement, with levels indicated relative to 
a base region, $B$. For example, a search starting from $B$ will find region 
$R_1$ by crossing  \emph{only} $\ell_1$ and region $R_2$ by crossing $\ell_1$ 
and $\ell_3$.

Thus, \myalg~still approaches the region enumeration problem as search, but 
instead it proceeds level-wise. All of the regions in the \emph{current} level 
can be easily divided among the available processing elements, which then 
search in parallel for their immediately adjacent regions; the result of this 
search is a list of regions comprising the entire \emph{next} level in the 
adjacency poset, which then becomes the current level and the process repeats. 
From an implementation standpoint, searching the region adjacency structure 
level-wise offers a useful way of reducing \myalg's memory footprint. In 
particular, once a level is fully explored, the regions it contains \emph{will 
never be seen again}. Thus, \myalg~need only maintain a hash of regions from 
one level at a time: the hash tables from previous levels can be safely 
discarded. \emph{In this way, \myalg~achieves a parallel region search but 
without resorting to hashing the entire list of discovered regions}.



Finally, we note that a search-type algorithm for region enumeration has a 
further advantage for solving a problem like \Cref{prob:scalar_lower_bound}, 
though. In particular, a search algorithm reveals each new region with a 
relatively low computational cost ---  see 
\eqref{eq:per-region_enumeration_cost}; this is in contrast to some other  
enumeration algorithms, which must \emph{run to completion before even  
{\bfseries one region} is available}. Since 
\Cref{alg:solve_problem_lower_bound} is structured such that it can terminate 
on the first violating region found, this has a considerable advantage for 
\textbf{UNSAT} problems, as a violating region may be found very early in the 
search.




\section{Experiments} 
\label{sec:experiments}
We conducted a series of experiments to evaluate the performance and 
scalability of \myalg~as a TLL verifier, both in its own right and relative to 
general NN verifiers applied to TLL NNs. In particular, we conducted the 
following three experiments:
\begin{enumerate}
	\item[Exp. 1)] Scalability of \myalg~as a function of TLL input 
		dimension, $n$; the number of local linear functions, $N$, and the 
		number of selector sets, $M$,  remained fixed.

	\item[Exp. 2)] Scalability of \myalg~as a function of the number of 
		local linear functions, $N$, with $N = M$; the input dimension, $n$, 
		remained fixed.

	\item[Exp. 3)] Comparison with general NN verifiers: nnenum 
		\cite{BakImprovedGeometricPath2020}, PeregriNN 
		\cite{KhedrPEREGRiNNPenalizedRelaxationGreedy2020} and Alpha-Beta-Crown 
		\cite{wang2021betacrown}.
\end{enumerate}
\noindent All experiments were run within a VMWare Workstation Pro virtual 
machine (VM) running on a Linux host with 48 hyper-threaded cores and 256 GB of 
memory. All instances of the VM were given 64 GB of memory, and a core count 
that is specified within each experiment. A timeout of 300 seconds was used  in 
all cases.

\subsection{Experimental Setup: Networks and Properties} 
\label{sub:setup}

\subsubsection{TLL NNs Verified} 
\label{ssub:tll_nns_verified}
Given that \Cref{prob:tll_verification_multi} can be decomposed into instances 
of \Cref{prob:scalar_upper_bound} and \Cref{prob:scalar_lower_bound}, all of 
these experiments were conducted on scalar-output TLL NNs using real-valued 
properties of the form in either \Cref{prob:scalar_upper_bound} or 
\Cref{prob:scalar_lower_bound}.

In Experiments 1 and 2, TLL NNs of the desired $n$, $N$ and $M$ were generated 
randomly according to the following procedure, which was designed to ensure 
that they are unlikely to be degenerate on (roughly) the input set $[-1,  
1]^n$. The procedure is as follows:
\begin{enumerate}
	\item Randomly generate elements of $W_\ell$ and $b_\ell$ according to 
		normal distributions of mean zero and standard deviations of $1/10$ and 
		$1$, respectively.

	\item Randomly generate selector sets, $s_j$, by generating random 
		integers between $0$ and $2^{N+1}-1$, and continue generating them by 
		this mechanism until $M$ are obtained such that no two selector sets 
		satisfy $s_j \subseteq s_{j^\prime}$ (a form of degeneracy).

	\item For each corresponding selector matrix, $S_j$, solve $M$ instances 
		of the following least squares problem:
		\begin{equation}
			\min_{x_j \in \mathbb{R}^n} \lVert S_j W_\ell x_j - S_j b_\ell  \rVert_2,
		\end{equation}
		to obtain the $M$ vectors, $x_1, \dots, x_M$.

	\item  Then scale each row of $W_\ell$ (from (1) above) by the 
		corresponding row of the vector:
		\begin{equation}
			\left[
				\begin{smallmatrix}
					\max_{j=1\dots M} | \llbracket x_j \rrbracket_{[1,1]} |
					&
					\dots
					&
					\max_{j=1\dots M} | \llbracket x_j \rrbracket_{[N,1]} |
				\end{smallmatrix}
			\right]^\text{T}.
		\end{equation}
		This has the (qualitative) effect of forcing the mutual intersections 
		of randomly generated local linear functions to be concentrated around 
		the origin.
\end{enumerate}

In Experiment 3, we obtained and used the scalar TLL NNs that were tested in 
\cite{FerlezBoundingComplexityFormally2020}. These networks all have $n=2$ and 
$N=M$; there are thirty examples for each of the sizes $N = M = 8, 16, 24, 32, 
40, 48, 56$ and $64$ (each size has a common neuron count, ranging from $256$  
neurons for $N=8$ to $16384$ neurons for $N=64)$. We used these networks in 
particular so as to enable some basis of comparison with the experimental 
results in \cite{FerlezBoundingComplexityFormally2020}. This is relevant, since 
that tool is not publicly available, and hence omitted from our comparison. 
\emph{Note: we considered these networks with different, albeit similar, 
properties to  those used in \cite{FerlezBoundingComplexityFormally2020}; see 
\Cref{ssub:properties_verified} below.}

\subsubsection{Input Constraints, $P_X$} 
\label{ssub:properties_verified}
In all experiments, we considered verification problems with $P_X = [-2,2]^n$.  
For the TLLs we generated, there is no great loss of generality in considering 
this fixed size for $P_X$, since we generated them to be ``interesting'' in 
this vicinity; see \Cref{ssub:tll_nns_verified}. However, using a 
hyper-rectangle $P_X$ was \emph{necessary} for Experiment 3, since some of the 
general NN verifiers accept only hyper-rectangle input constraints. Thus, we 
made the universal choice $P_X = [-2,2]^n$ for consistency between  
experiments.

Note, however, that \cite{FerlezBoundingComplexityFormally2020} verified 
general polytopic input constraints on the networks we borrowed for Experiment 
3. Nevertheless, we expect the results for \myalg~in Experiment 3 to be 
somewhat comparable to the results in 
\cite{FerlezBoundingComplexityFormally2020}, since all of those polytopic 
constraints are \emph{contained} in the box, $P_X = [-2,2]^2$.

\subsubsection{Output Properties Verified} 
\label{ssub:properties_verified}
For a scalar TLL, only two parameters are required to specify an output  
property: a real-valued scalar and the direction of the inequality. In all 
cases, the direction of the inequality was determined by the outcome of 
Bernoulli random variable. And in all cases except one (noted in Experiment 2), 
the random real-valued property was generated by the following procedure.  
First, the TLL network was evaluated at 10,000 samples collected from the set 
$P_X$; any property between the min and max of these output samples is 
guaranteed to be \textbf{UNSAT}. Then, to get a mixture of 
\textbf{SAT}/\textbf{UNSAT} properties, we select a random property from this 
interval symmetrically extended to twice its original size.

\subsection{Experiment 1: Input Dimension Scalability} 
\label{sub:experiment_1_input_dimension_scalability}
In this experiment, we evaluated the scalability of \myalg~as a function of 
input dimension of the TLL to be verified. 
To that end, we generated a suite of TLL NNs with input sizes varying from 
$n=1$ to $n=30$, using the procedure described in 
\Cref{ssub:tll_nns_verified}. We generated 20 instances for each size, and for 
all TLLs, we kept $N=M=64$ constant. We then verified each of these TLLs with 
respect to its own, randomly generated property (as described in 
\Cref{ssub:properties_verified}). For this experiment, \myalg~was run in a VM 
with 32 cores.

\Cref{fig:experiment1} summarizes the results of this experiment with a 
box-and-whisker plot of verification times: each box-and-whisker\footnote{As 
usual, the boxes denote the first and third quartiles; the orange horizontal 
line denotes the median; and the whiskers show the maximum and minimum.} 
summarizes the verification times for the twenty networks of the corresponding 
input dimension; no properties/networks resulted in a timeout. The data in the 
figure shows a clear trend of increasing median, as expected for progressively 
harder problems (recall the runtime complexities indicated in 
\Cref{sub:on_the_complexity_of_myalg}). By contrast, note that the minimum and 
maximum runtimes grow very slowly with dimension: given the complexity analysis 
of \Cref{sub:on_the_complexity_of_myalg}, we speculate that these results are 
likely due to the characteristics of the generated TLLs. That is, the 
generation procedure appears to ``saturate'' in the sense that it eventually 
produces networks which require, on average, a constant number of loop 
iterations to verify.


\begin{figure}[t]
	\centering %
	\includegraphics[width=0.40\textwidth,trim={0.72in 0.92in 1in 1.125in},clip]{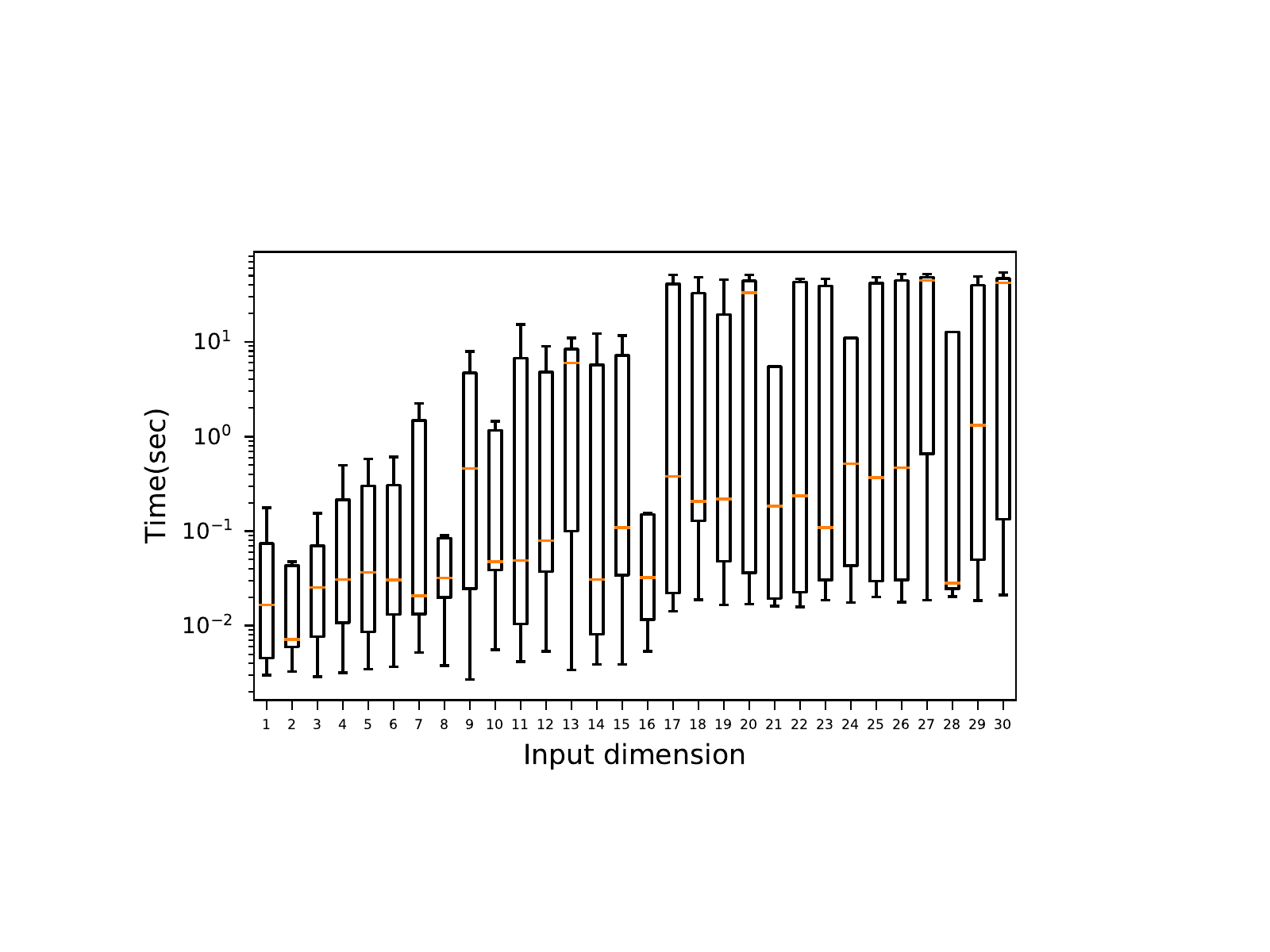} %
	\vspace{-2mm}
	\caption{Experiment 1. Scaling the Input Dimension} %
	\vspace{-5mm}
	\label{fig:experiment1} %
\end{figure}


\subsection{Experiment 2: Network Size Scalability} 
\label{sub:experiment_2_network_size_scalability}
In this experiment, we evaluated the scalability of \myalg~as a function of the 
number of local linear functions, $N$, in the TLL to be verified. To that end, 
we generated a suite of of TLL NNs local linear functions ranging in  number 
from $N=16$ to $N=512$, again using the procedure described in 
\Cref{ssub:tll_nns_verified}. We generated 20 instances for each value of $N$, 
and for all networks we set $M=N$ and $n=15$. We then verified each of these 
networks with respect to its own, randomly generated property. The properties 
for network sizes $N=16$ through $N=256$ were generated as described in 
\Cref{ssub:properties_verified}; however, a our TensorFlow implementation 
occupied too much memory to generate samples for TLLs of size $N=512$, so the 
properties for these networks were generated using the bounds for the $N=256$  
TLLs. For this experiment, \myalg~was run in a VM with 32 cores.


\Cref{fig:experiment2} summarized the results of this experiment with a 
box-and-whisker plot of verification times: each box-and-whisker summarize the 
verification times for the twenty test cases of the corresponding size, much as 
in \Cref{sub:experiment_1_input_dimension_scalability}. However, since some 
verification problems timed out in this experiment, those time outs were 
excluded from the box-and-whisker; they are instead indicated by a superimposed 
bar graph, which displays a count of the number of timeouts obtained from each 
group of equally-sized TLLs. The data in this figure shows the expected trend of 
increasingly difficult verification as $N$ increases; this is especially 
captured by the trend of experiencing more timeouts for larger networks. The 
outlier to this trend is the size $N=512$, but this is most likely due to 
different method of generating properties for these networks (see above). 
Finally, note that the minimum verification times exhibit a much slower growth 
trend, as in Experiment 1.


\begin{figure}[t]
	\centering %
	\includegraphics[width=0.40\textwidth,trim={0.72in 0.9in 0.85in 1.125in},clip]{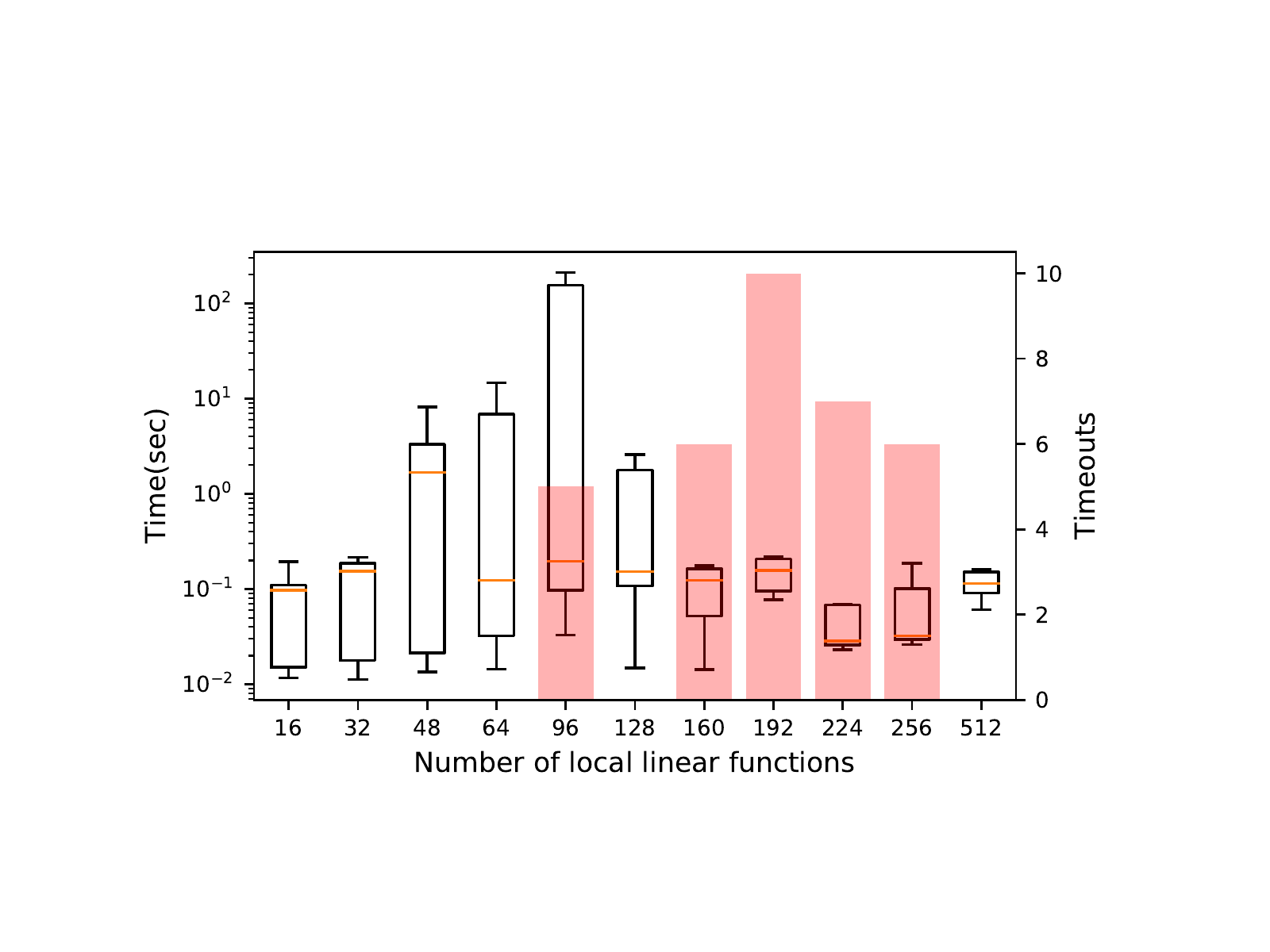} %
	\caption{Experiment 2. Scaling the Number of Local Linear Functions} %
	\vspace{-5mm}
	\label{fig:experiment2} %
\end{figure}


\subsection{Experiment 3: General NN Verifiers} 
\label{sub:experiment_3_comparison_with_general_nn_verifiers}
In this experiment, we compared the verification performance of \myalg~with 
state-of-the-art (SOTA) NN verifiers designed to work on general deep NNs. For 
this experiment, we compared against generic verifiers Alpha-Beta-Crown 
\cite{wang2021betacrown}, nnenum \cite{BakImprovedGeometricPath2020} and 
PeregriNN \cite{KhedrPEREGRiNNPenalizedRelaxationGreedy2020}, as a 
representative sample of SOTA NN verifiers. Moreover, we conducted this 
experiment on the same 240 networks used in 
\cite{FerlezBoundingComplexityFormally2020}, and described in 
\Cref{ssub:tll_nns_verified}; this further facilitates a limited comparison 
with that algorithm, subject to the caveats described in 
\Cref{ssub:tll_nns_verified} and \Cref{ssub:properties_verified}.

In order to make this test suite of TLLs available to the generic verifiers, 
each network was first implemented as a TensorFlow model using a custom 
implementation tool. The intent was to export these TensorFlow models to the 
ONNX format, which each of the generic verifiers can read. However, most of the 
generic verifiers do not support implementing multiple feed-forward paths by 
tensor reshaping operations, as in the most straightforward implementation of a 
TLL; see \Cref{fig:tll_arch} and \Cref{sub:two_layer_lattice_neural_networks}. 
Thus, we had to first  ``flatten'' our TensorFlow implementation into an  
equivalent network where each $\min$ network accepts the outputs of \emph{all} 
of the selector matrices, only to null the irrelevant ones with additional 
kernel zeros in the first layer. This is highly sub-optimal, since it results 
in neurons receiving many more inputs than are really required. However, we 
could not devise another method to circumvent this limitation present in most 
of the tools.

\begin{figure}[t]
	\centering %
	\includegraphics[width=0.40\textwidth,trim={0.65in 0.85in 1in 1.125in},clip]{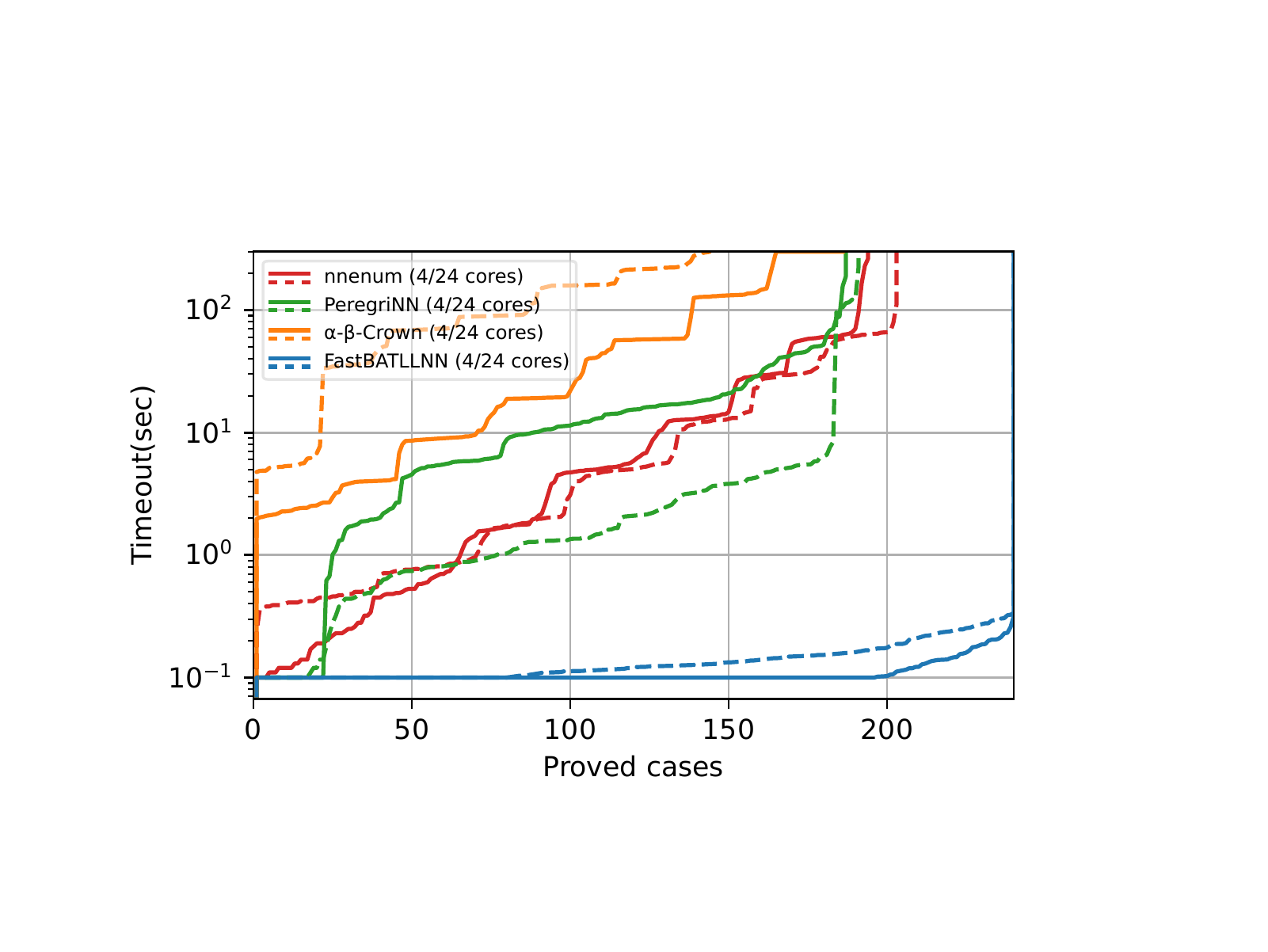} %
	\vspace{-2mm}
	\caption{Experiment 3. General NN Verifiers} %
	\vspace{-5mm}
	\label{fig:experiment3} %
\end{figure}

With all of the tools able to read the same NNs in our (borrowed) test suite, 
we randomly generated verification properties for each of the networks, as in 
the previous experiments. However, recall that the generic NN verifiers have a 
slightly different interpretation of properties compared with \myalg. For 
scalar-output networks, this amounts to verifying the properties in 
\Cref{prob:scalar_upper_bound} and \Cref{prob:scalar_lower_bound} with the same 
interpretations, but with \emph{non-strict} inequalities instead of  
\emph{strict} inequalities. Since this will only generate divergent results 
when a property happens to be exactly equal to the maximum or minimum value on 
$P_X$, we elide this issue.

Thus, we ran each of the tools, \myalg, Alpha-Beta-Crown, nnenum and PeregriNN 
on this test suite of TLLs and properties. PeregriNN was configured with 
$\texttt{SPLIT\_RES = 0.1}$; nnenum was configured with  
\texttt{TRY\_QUICK\_APPROX = True} and all other parameters set to default  
values; and  Alpha-Beta-Crown was configured with \texttt{input\ space\ 
splitting}, \texttt{share\_slopes = True},  \texttt{no\_solve\_slopes = True},  
$\texttt{lr\_alpha = 0.01}$, and \texttt{branching\_method = sb}. All the 
solvers used \texttt{float64} computations.  Furthermore, we ran two versions 
of this experiment, one where the VM had 4 cores and one where the VM had 24 
cores.

\Cref{fig:experiment3} summarizes the results of this experiment in the form of 
a cactus plot: a point on any one of the curves indicates the timeout that 
would be required to obtain the corresponding number of proved cases for that 
tool (from among of the test suite described above). As noted, each tool was 
run separately in two VMs, one with 4 cores and one with 24 cores; thus, each 
tool has two curves in \Cref{fig:experiment3}. The data shows that \myalg~is on 
average 960$\times$/435$\times$ faster than nnenum, 1800$\times$/1370$\times$ 
faster than Alpha-Beta-Crown, and 1000$\times$/500$\times$ faster than 
PeregriNN using 4 and 24 cores respectively. \myalg~also proved all 240 
properties in just 17 seconds (4 cores), whereas nnenum proved 193, 
Alpha-Beta-Crown proved 153, and PeregriNN proved 186. Note that unlike the 
other tools, \myalg~doesn't exhibit exponential growth in execution time on 
this test suite, which is consistent with the complexity analysis in 
\Cref{sub:on_the_complexity_of_myalg}. Despite the caveats noted  above, 
\myalg~also compares favorably with the execution times shown in \cite[Figure  
1(b)]{FerlezBoundingComplexityFormally2020}, which end up in the 100's or 
1000's of seconds for $N=64$. Finally, note that although \myalg~suffered from 
slightly worse measured performance with 24 cores, the rate at which its 
timeouts increased was significantly slower than with 4 cores; this suggests 
the data is reflecting constants, rather than inefficient use of parallelism. 
Of similar note, Alpha-Beta-Crown seems to suffer from the overhead of using 
more CPU cores. Based on our understanding, the algorithm doesn't benefit from 
multiple cores except for solving MIP problem. 

\begin{acks}
	This work was supported by the \grantsponsor{}{National Science 
	Foundation}{https://www.nsf.gov} under grant numbers \grantnum{}{\#2002405} 
	and \grantnum{}{\#2013824}. 
\end{acks}

\newpage

\bibliographystyle{ACM-Reference-Format} %
\bibliography{mybib}







\end{document}